\def\eqref#1{equation~\ref{#1}}
\def\1{\bm{1}}
\DeclareMathAlphabet{\mathsfit}{\encodingdefault}{\sfdefault}{m}{sl}
\SetMathAlphabet{\mathsfit}{bold}{\encodingdefault}{\sfdefault}{bx}{n}
\newcommand{\real}{\mathbb{R}}
\renewcommand{\paragraph}[1]{\textbf{#1.}}
\newcommand{\ourmodel}{RGCCL}
\newtheorem{assumption}{Assumption}[section]
\newtheorem{definition}{Definition}
\newtheorem{theorem}{Theorem}[section]
\newtheorem{lemma}[theorem]{Lemma}
\newtheorem{proposition}{Proposition}
\title{Understanding Community Bias Amplification in Graph Representation Learning}
\author{%
	\textbf{Shengzhong Zhang$^{\text{1}}$}~~~
	\textbf{Wenjie Yang$^{\text{1}}$}~~~
	\textbf{Yimin Zhang$^{\text{2}}$}~~~
	\textbf{Hongwei Zhang$^{\text{1}}$}\\
	\textbf{Divin Yan$^{\text{1}}$}~~~
	\textbf{Zengfeng Huang$^{\text{1}}$\thanks{Corresponding Author. Email: huangzf@fudan.edu.cn}}\\
	$^{\text{1}}$Fudan University~~
	$^{\text{2}}$Ant Group
}
\begin{document}

\maketitle

\begin{abstract}
In this work, we discover a phenomenon of community bias amplification in graph representation learning, which refers to the exacerbation of performance bias between different classes by graph representation learning. We conduct an in-depth theoretical study of this phenomenon from a novel spectral perspective. 
Our analysis suggests that structural bias between communities results in varying local convergence speeds for node embeddings. This phenomenon leads to bias amplification in the classification results of downstream tasks. Based on the theoretical insights, we propose random graph coarsening, which is proved to be effective in dealing with the above issue. Finally, we propose a novel graph contrastive learning model called Random Graph Coarsening Contrastive Learning (\ourmodel), which utilizes random coarsening as data augmentation and mitigates community bias by contrasting the coarsened graph with the original graph. Extensive experiments on various datasets demonstrate the advantage of our method when dealing with community bias amplification. 
\end{abstract}

\section{Introduction}

Graph representation learning (GRL) aims to generate embedding vectors capturing both the structure and feature information. Graph neural networks (GNNs) are the primary encoder architecture for deep GRL \citep{bojchevski2018deep, zhu2020grace,zhu2021gca,zhang2021ccassg,zheng2022ggd}, which are often trained with unsupervised graph contrastive objectives. Such methods are called graph contrastive learning (GCL) and exhibit outstanding performance in various downstream tasks. Compared to other unsupervised methods, the distinctiveness of GCL lies in the encoder's simultaneous use of structural and feature information. Due to the encoder's use of structural information, the final embeddings are likely to inherit the structural bias in the graph, which may cause undesirable performance unfairness in downstream tasks. This phenomenon is demonstrated in Figure \ref{fig:exp}, where we compare the node classification performance of MLP (utilizing feature information only) and the state-of-the-art GCL model GGD \citep{zheng2022ggd}. Although GGD has a much better overall accuracy than MLP, GGD exhibits greater performance differences between different classes of nodes. In other words, GCL exacerbates the performance bias between different classes.

We refer to the phenomenon exhibited in Figure \ref{fig:exp} as \emph{community bias amplification}. This exacerbated bias arises from local structural disparities among classes of nodes and is unrelated to labels and other information. Graph structural bias problems have been studied in previous works \citep{feng2020invest, kang2022rawlsgcn, liu2023on, wang2022grade}. However, they focused on the structure of individual nodes such as degrees and the distance to class boundaries. On the other hand, we study community bias, which is a collective structural bias issue of the entire community. Community bias studied in this work should also be distinguished from the class imbalance problem \citep{song2022tam}. Although both consider the collective bias of communities, works on class-imbalanced classification focus on (semi-)supervised learning and aim to reduce prediction bias caused by imbalanced label distributions.

This paper investigates the following two questions: \emph{1. Why community bias amplification exists in existing GCL methods? 2. Can we design new GCL models to alleviate this issue?} To answer the first question, we analyze the structural bias problem from a spectral perspective, which provides a theoretical explanation on the causes of  community bias amplification in existing GCL models. There have been numerous works conducting spectral analysis of GNNs, e.g., \citep{kipf2016semi,wu2019simplifying,oono2020asymptotic,rong2019dropedge}.However, existing analyses mainly focus on the global behavior: They try to characterize the distribution of node representations using the spectrum of the message passing operator. We point out that this is not suitable when structural bias exists across different regions of the graph. If the number of layers in a GNN is not too large as prevalent in real applications, the embedding distributions of different communities are better characterized by their local spectrum. In particular, if the structures of two communities differ a lot, then the second largest eigenvalues of their normalized adjacency matrices can be quite different, which leads to very different convergence speeds to the stationary subspace. As a result, the embedding distributions of the two communities exhibit different levels of concentration. We then show that such an imbalance in the embedding densities can cause unfairness in downstream tasks through a natural statistical model.

\begin{figure*}[t]
	\setlength{\abovecaptionskip}{-0.1cm}
	\setlength{\belowcaptionskip}{-0.5cm}
	\centering
  \subfigure[MLP on Cora]{\includegraphics[width=0.24\textwidth]{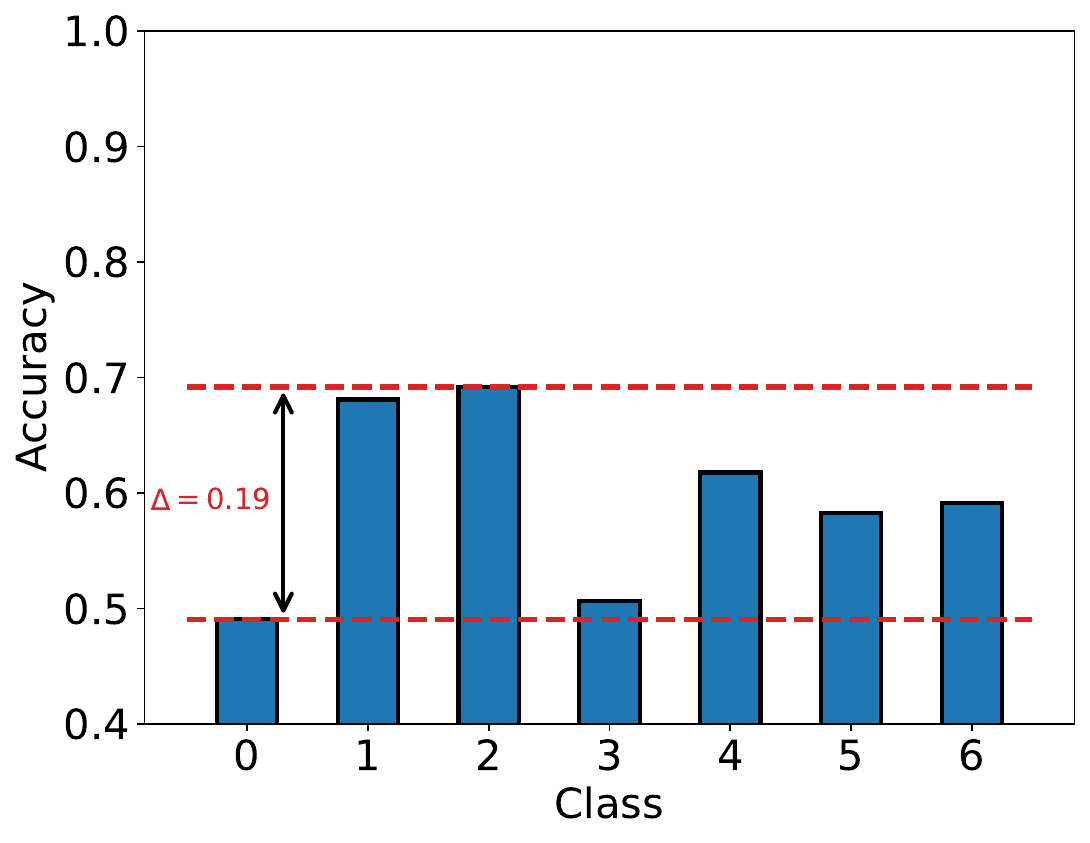}}
   \subfigure[GGD on Cora]{\includegraphics[width=0.24\textwidth]{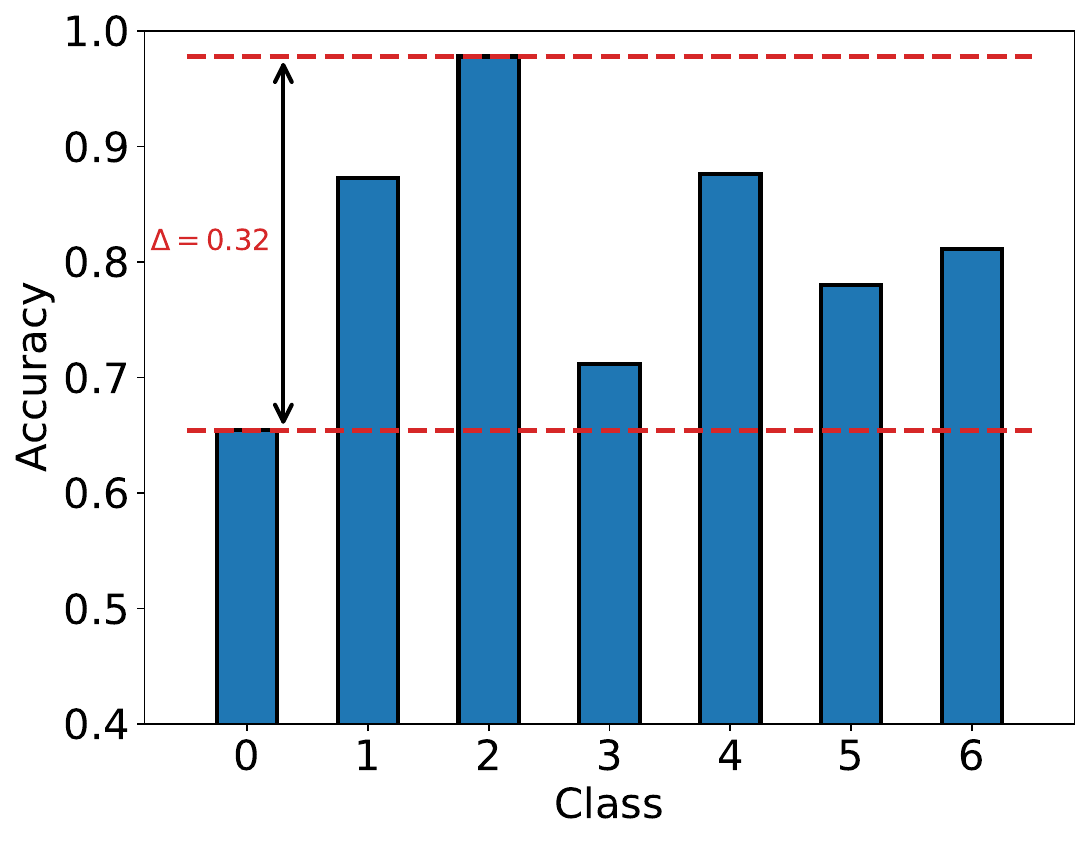}}
    \subfigure[MLP on Citeseer]{\includegraphics[width=0.24\textwidth]{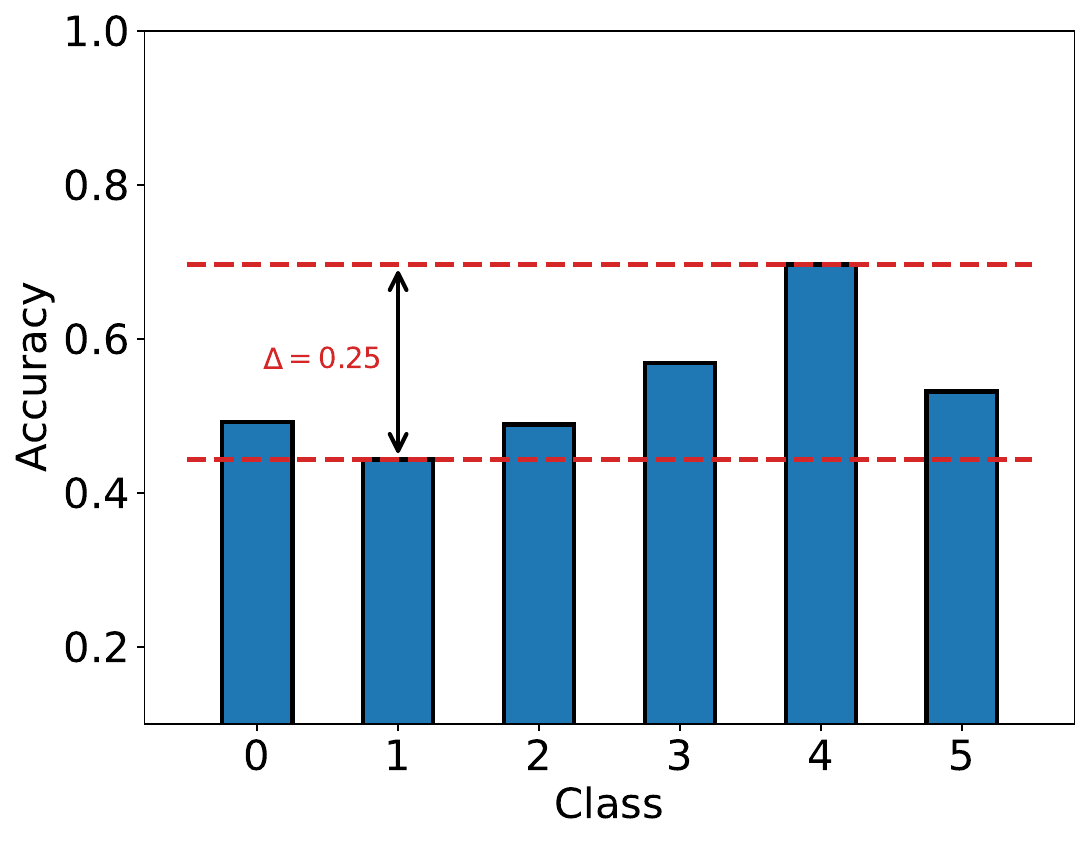}}
  \subfigure[GGD on Citeseer]{\includegraphics[width=0.24\textwidth]{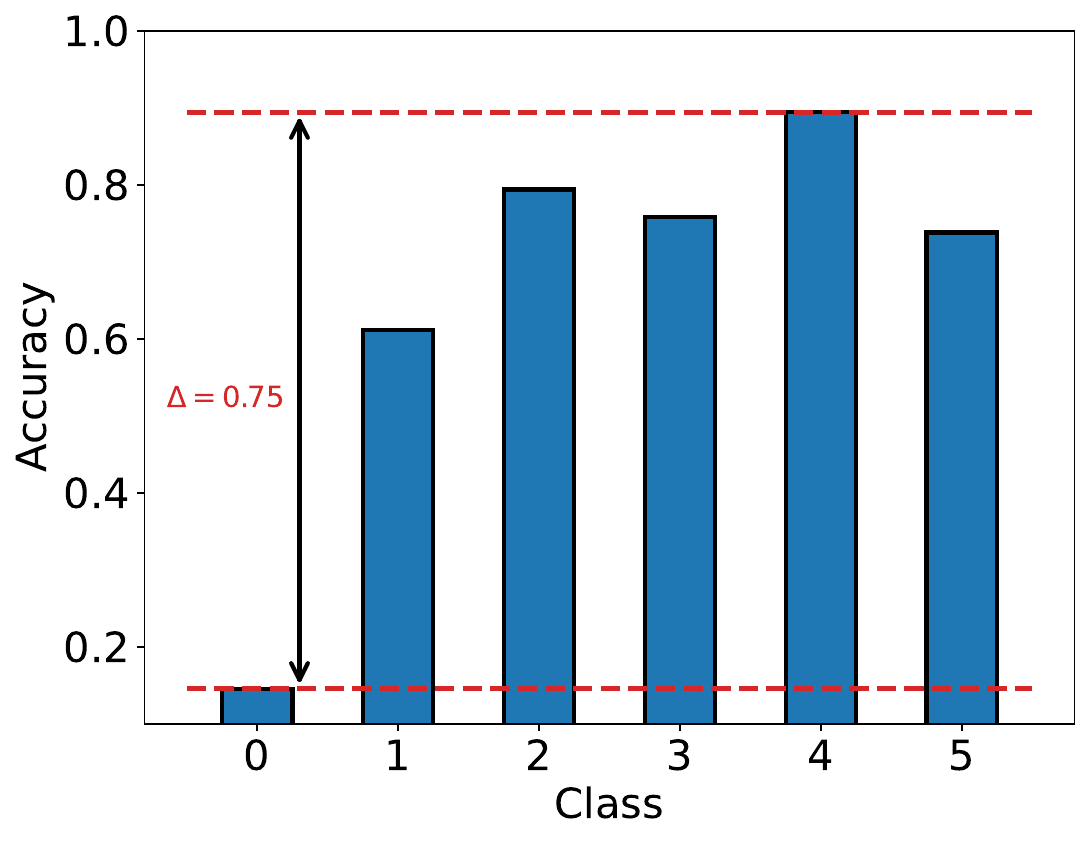}}
	\caption{The classification performance of MLP and GGD in different classes on Cora and Citeseer, where each class has 20 labeled nodes in the training set. $\Delta$ represents the maximum performance difference observed between classes. }
 	\label{fig:exp}
\end{figure*}

Based on the theoretical analysis, we then focus on how to alleviate the  community bias. We propose a simple data augmentation technique, namely random graph coarsening, and provide theoretical justifications on the effectiveness. We finally propose a novel graph contrastive learning model, called \ourmodel, which utilizes random graph coarsening as data augmentation and uses a contrastive loss that compares the coarsened graph with the original graph. Empirical results on real datasets show our model effectively reduce performance disparities between different classes and also achieves better overall accuracy than baselines, confirming our theoretical analyses.

Our contributions are summarized as follows: 
\begin{enumerate}

\item We uncover the community bias in the graph and analyze the causes of this problem from a spectral perspective. We show that local structural bias leads to embedding density imbalance, which is harmful on downstream tasks in terms of fairness.

\item We show that an appropriately designed random graph coarsening algorithm can be used as an effective data augmentation tool for alleviating the issue of embedding density imbalance.

\item Based on our theoretical analysis, we propose a novel GCL model, called \ourmodel. Our model mitigates community bias by comparing the coarsened graph with the original graph.

\item We empirically compare \ourmodel~with other graph contrastive learning models in various datasets. Experimental results demonstrate the advantage of \ourmodel, which confirms the effectiveness of using random coarsening to mitigate community bias.

\end{enumerate}

\section{Preliminaries}
\noindent\textbf{Notation.} Consider an undirected graph $G=(V, E, X)$, where $V$ represents the vertex set, $E$ denotes the edge set, and $X\in \real^{n\times \mathcal{D}}$ is the feature matrix. Let $n=|V|$ and $m=|E|$ represent the number of vertices and edges, respectively. We use $A\in\{0,1\}^{n\times n}$ to denote the adjacency matrix of $G$ and $\{v_i, v_j\}$ to denote the undirected edge between node $v_i$ and node $v_j$. The degree of node $v_i$ denoted as $d_i$ is the number of edges incident on $v_i$. The degree matrix $D$ is a diagonal matrix and its $i$-th diagonal entry is $d_i$. 

\noindent\textbf{Graph neural network}.In each layer of a GNN, the representation of a node is computed by recursively aggregating and transforming representation vectors of its neighboring nodes from the last layer. One special case is the Graph Convolutional Network (GCN) \citep{kipf2016semi}. The layer-wise propagation rule of GCN is:
\begin{equation}
	H^{(l+1)}=\sigma \left( \widetilde{D}^{-\frac{1}{2}}\widetilde{A}\widetilde{D}^{-\frac{1}{2}}H^{(l)}W^{(l)} \right),
\end{equation}
where $\widetilde{A} = A+I$, $\widetilde{D}=D+I$ and $W^{(l)}$ is a learnable parameter matrix. GCNs consist of multiple convolution layers of the above form, with each layer followed by an activation $\sigma$ such as Relu. 

\noindent\textbf{Graph coarsening}. The coarse graph is a smaller graph $G'=(A', X')$. $G'$ is obtained from the original graph by computing a partition $P$ of $V$. The partition can be represented by a binary matrix $P\in\{0,1\}^{n\times n'}$, with $P_{ij} = 1$ if and only if vertex $i$ belongs to cluster $j$.  We define $S$ as the set of super-node and for each super-node $i \in S$, $S_i$ as the set of nodes that make up the super-node $i$. $I_u$ is the index of which supernode node $u$ belongs to.

\section{Exploring Community Bias}
There has been lots of work investigating the asymptotic behavior of GNNs as the number of layers $L$ goes to infinity, e.g., \citep{oono2020asymptotic,rong2019dropedge}. The general conclusion is that as $L\rightarrow \infty$, the representations of all nodes converge to a $1$-dimensional subspace, assuming the graph $G$ is connected. The convergence speed is determined by the second largest eigenvalue of the message passing operator $\hat{A}$. Following the notation from \cite{oono2020asymptotic}, we denote the maximum singular value of $W^{(l)}$ by $\omega_l$ and set $\omega:= \max_{l\in [L]} \omega_l$ and assume that $W^{(l)}$ of all layers are initialized so that $\omega \leq 1$. Given a subspace $\mathcal{M}$, we use $d_\mathcal{M}:= \mathbf{inf}_{Y\in \mathcal{M}}||X - Y||_F$ to measure the closeness between $X$ and $\mathcal{M}$, where $||\cdot ||_F$ denote the Frobenius norm. \cite{oono2020asymptotic} shows that if $G$ is connected, there is a 1-d subspace $\mathcal{M}$ such that for all $l$
\begin{align}
    d_\mathcal{M}(H^{(l+1)}) \leq \omega\lambda d_\mathcal{M}(H^{(l)}), \label{eqn:convergence}
\end{align}
which means the embeddings of all nodes collapse to $\mathcal{M}$ exponentially fast. However, in real applications, $L$ is typically small. In these cases, such asymptotic results are not accurate predictions of the model behavior. In particular, they ignore structural differences between different regions of the graph. Here, we illustrate the problem through a simple example. 

\subsection{Illustration of Embedding Density Imbalance}\label{sec:convergenceBias}
We consider the following example. There are two communities $C_1, C_2$ in the graph, and $C_1$ is more densely connected than $C_2$. $C_1$ and $C_2$ are loosely connected (see the left of Figure~\ref{fig:convergence}). Then, the expression of the symmetric normalized adjacency matrix $\hat{A}$ can be succinctly represented using a block matrix as follows:
$
\begin{bmatrix}
  \hat{A_1} & \hat{B_1}\\
 \hat{B_2} & \hat{A_2}   
\end{bmatrix}
$, where the matrix is partitioned according to $C_1$ and $C_2$. By the assumption, $||\hat{B_1}||_F$ and $||\hat{B_2}||_F$ are close to $0$, and since $C_1$ has a better connectivity than $C_2$, the second largest eigenvalue of $\hat{A_1}$, denoted by $\lambda(\hat{A_1})$ is smaller than $\lambda(\hat{A_2})$ \citep{chung1997spectral}. According to (\ref{eqn:convergence}), the representations of all nodes converges to $\mathcal{M}$ with speed exponential in $\lambda(\hat{A})$. A more detailed analysis presented below shows that, the two communities exhibit different convergence speed in the first few layers, due to different local connectivities. 

The representations of the nodes in the two communities can be separately expressed as $H_1^{(l+1)}=\sigma ( \hat{A_1}H_1^{(l)}W^{(l)} + \hat{B_1}H_2^{(l)}W^{(l)} )$ and $H_2^{(l+1)}=\sigma ( \hat{A_2}H_2^{(l)}W^{(l)} + \hat{B_2}H_1^{(l)}W^{(l)} )$. Since operator norms of $\hat{B_1}$ and $\hat{B_2}$ are very close to $0$, based on the result of \cite{oono2020asymptotic}, we have:
\begin{equation}
\left\{
\begin{array}{cc}
 d_\mathcal{M}(H^{(l+1)}_{1}) \approx d_\mathcal{M} (\sigma ( \hat{A_1}H_1^{(l)}W^{(l)} ))  \leq \omega\lambda(\hat{A_1}) d_\mathcal{M}(H^{(l)}_{1}), \\
  d_\mathcal{M}(H^{(l+1)}_{2}) \approx d_\mathcal{M} (\sigma ( \hat{A_2}H_2^{(l)}W^{(l)} )) \leq \omega\lambda(\hat{A_2}) d_\mathcal{M}(H^{(l)}_{2}), \label{eqn:localCon}
\end{array}
\right.
\end{equation}

For large $l$, the effect of $\hat{B_1}$ and $\hat{B_2}$ cannot be ignored, and eventually the convergence follows (\ref{eqn:convergence}). However, when the number of iterations is relatively small as in many real applications, the embedding distributions of different regions are much better characterized by local connectivity (\ref{eqn:localCon}).

If the structure of communities differs dramatically, for example, $\lambda(\hat{A_1}) \ll \lambda(\hat{A_2})$, the embeddings of nodes in $C_1$ will be much more concentrated than those in $C_2$. Figure \ref{fig:convergence} provides an example of this phenomenon. Although the node features for $C_1$ and $C_2$ are sampled with the same variance, the variance of their embeddings differs due to the convergence bias resulting from the distinct structures.
We provide a more quantitative analysis on the contextual stochastic block model (CSBM) \citep{deshpande2018contextual}, a widely used statistical model for analyzing expressive power of GNNs \citep{baranwal2021graph,wu2022non}; please refer to Appendix \ref{appendix:csbm} for more details. 
\begin{figure}[thbp]
	\centering
\includegraphics[width=0.7\textwidth]{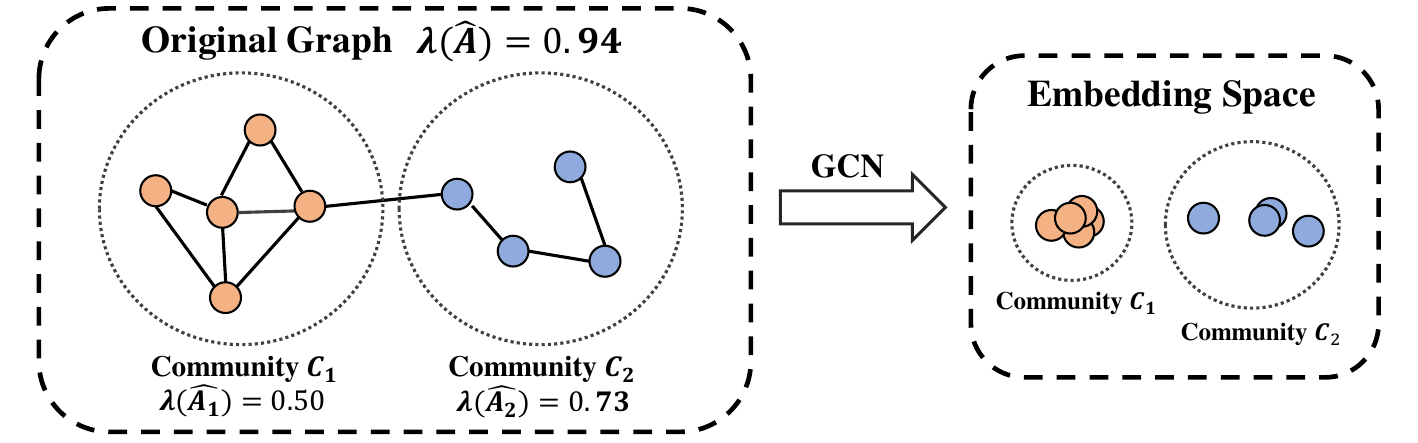}
\vspace{-3mm}
 \caption{A simple example of convergence bias. We sample feature vectors for nodes in $C_1$ and $C_2$ from normal distributions, $\mathcal{N}(
\begin{bmatrix}
-1 \\
-1
\end{bmatrix}
,\begin{bmatrix}
1&0 \\
0&1
\end{bmatrix})$ and $\mathcal{N}(
\begin{bmatrix}
1 \\
1
\end{bmatrix}
,\begin{bmatrix}
1&0 \\
0&1
\end{bmatrix})$, respectively. We then apply a 2-layer graph convolutional network, and the resulting embeddings of the nodes are visualized.}
	\label{fig:convergence}
\end{figure}

\subsection{From Embedding Density Imbalance to Community Bias}\label{sec:accImba}
We have discussed how different local convergence speeds lead to varying degrees of dispersion in local embedding distributions, and next we provide a theoretical justification on why this can lead to community bias in downstream tasks. 

To illustrate the issue, we consider a binary classification problem and assume the node embeddings from each class follows a Gaussian distribution. 
We consider the optimal Bayes classifier for the above model, which is known to be the \emph{quadratic discriminant} rule. 
\begin{definition}[Quadratic Discriminant Analysis]
    For a binary classification problem, where class 1 has mean ${\mu}_1$ and variance ${\Sigma}_1$ and class 2 has mean ${\mu}_2$ and variance ${\Sigma}_2$. Each sample is drawn from either one with equal probability. Given a new sample $x$, the QDA rule is
    \begin{equation}
    \arg\max_{c=1,2} -\frac{1}{2}\log\det {\Sigma}_c-\frac{1}{2}(x-{\mu}_c)^T{\Sigma}^{-1}_c(x-{\mu}_c).
    \end{equation}
\end{definition}
To simplify the discussion, we focus on the one-dimensional case, while the results for higher dimensions are similar. More specifically, each sample $x$ is drawn from $\mathcal{N}(\mu_1, \sigma_1^2)$ or $\mathcal{N}(\mu_2, \sigma_2^2)$ with equal probability, and let $y$ be the label of $x$. Assume $\sigma_1 \neq \sigma_2$, meaning the data distribution of one class is more concentrated than the other, which models the embedding density imbalance. For this simple case, the optimal classification error can be represented as a closed form.
\begin{proposition}
For the above QDA classifier, the error probability of samples from class 1 is:
\begin{equation}
    p_1= \mathbb{P}(Y^2>\frac{(\sigma_1^2+\sigma_2^2)^2}{\sigma_2^2-\sigma_1^2}-(\sigma_2^2-\sigma_1^2)+2\sigma_1^2\sigma_2^2\mathrm{log}(\frac{\sigma_2}{\sigma_1})),
\end{equation}
    
    where $Y\sim \mathcal{N}(\sqrt{|\sigma_2^2-\sigma_1^2|}+(2I(\sigma_1>\sigma_2)-1)\frac{\sigma_1^2+\sigma_2^2}{\sqrt{|\sigma_2^2-\sigma_1^2|}}, |\sigma_1^2-\sigma_2^2|\sigma_1^2)$.
\end{proposition}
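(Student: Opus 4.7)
The plan is to reduce the QDA misclassification event for a class-1 sample to a quadratic inequality in $X\sim\mathcal{N}(\mu_1,\sigma_1^2)$, complete the square, and then rescale so the resulting quadratic form is exactly the $Y^2$ claimed in the statement. Writing out the QDA rule in one dimension and cancelling the common $-\tfrac12$ factor, a class-1 sample is misclassified iff $\sigma_2^2(X-\mu_1)^2-\sigma_1^2(X-\mu_2)^2>2\sigma_1^2\sigma_2^2\log(\sigma_2/\sigma_1)$. Taking the concrete means $\mu_1=-1,\ \mu_2=1$ suggested by the running example of Figure~\ref{fig:convergence}, the left side expands to $(\sigma_2^2-\sigma_1^2)X^2+2(\sigma_1^2+\sigma_2^2)X+(\sigma_2^2-\sigma_1^2)$, which is the object I would manipulate.

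In the case $\sigma_1<\sigma_2$ the leading coefficient is positive, so I would divide through by $\sigma_2^2-\sigma_1^2$ and complete the square to rewrite the event as $(X-a)^2>K$ with $a=-(\sigma_1^2+\sigma_2^2)/(\sigma_2^2-\sigma_1^2)$ and $K$ collecting the residual constants. A direct calculation gives $\mu_1-a=2\sigma_1^2/(\sigma_2^2-\sigma_1^2)$, so setting $Y:=(X-a)\sqrt{\sigma_2^2-\sigma_1^2}$ produces variance $|\sigma_1^2-\sigma_2^2|\sigma_1^2$ and mean $2\sigma_1^2/\sqrt{\sigma_2^2-\sigma_1^2}$, which matches the simplification $\sqrt{|\sigma_2^2-\sigma_1^2|}-(\sigma_1^2+\sigma_2^2)/\sqrt{|\sigma_2^2-\sigma_1^2|}=-2\sigma_1^2/\sqrt{\sigma_2^2-\sigma_1^2}$ in the stated mean up to a sign irrelevant for $Y^2$. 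The event $(X-a)^2>K$ rescales to $Y^2>K(\sigma_2^2-\sigma_1^2)$, and using $a^2(\sigma_2^2-\sigma_1^2)=(\sigma_1^2+\sigma_2^2)^2/(\sigma_2^2-\sigma_1^2)$ the scaled threshold collapses term-by-term to $(\sigma_1^2+\sigma_2^2)^2/(\sigma_2^2-\sigma_1^2)-(\sigma_2^2-\sigma_1^2)+2\sigma_1^2\sigma_2^2\log(\sigma_2/\sigma_1)$, exactly the expression in the proposition.

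For the case $\sigma_1>\sigma_2$ the leading coefficient flips sign and so does $\log(\sigma_2/\sigma_1)$, so the misclassification region becomes the \emph{interior} of an interval. Multiplying the inequality by $-1$ and redoing the completion of the square yields the analogous form with the sign of the centering $\mu_1-a$ reversed, which is precisely what the indicator $(2I(\sigma_1>\sigma_2)-1)$ in the stated mean of $Y$ is designed to encode. The main obstacle is this sign bookkeeping: one has to track the flip of the leading coefficient, the flip of $\log(\sigma_2/\sigma_1)$, and the direction of the resulting $(X-a)^2$ inequality simultaneously, and verify that the three residual terms recombine into the single $|\sigma_2^2-\sigma_1^2|$-normalized expression stated. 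Once that case analysis is done, nothing remains beyond the elementary algebra of completing the square and the standard affine transformation law for a univariate Gaussian.
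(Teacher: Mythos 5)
Your route --- rewrite the QDA comparison as the quadratic inequality $\sigma_2^2(X-\mu_1)^2-\sigma_1^2(X-\mu_2)^2>2\sigma_1^2\sigma_2^2\log(\sigma_2/\sigma_1)$, complete the square, and rescale by $\sqrt{|\sigma_2^2-\sigma_1^2|}$ --- is the natural one (the paper itself gives no proof of this proposition), and your handling of the case $\sigma_1<\sigma_2$ is correct: the rescaled threshold is exactly the stated one, the variance is $|\sigma_1^2-\sigma_2^2|\sigma_1^2$, and the mean agrees with the stated mean up to a sign that is irrelevant for $Y^2$. You are also right that the formula cannot hold for arbitrary means: the general threshold is $\sigma_1^2\sigma_2^2(\mu_1-\mu_2)^2/(\sigma_2^2-\sigma_1^2)+2\sigma_1^2\sigma_2^2\log(\sigma_2/\sigma_1)$, which coincides with the stated $\frac{(\sigma_1^2+\sigma_2^2)^2}{\sigma_2^2-\sigma_1^2}-(\sigma_2^2-\sigma_1^2)+2\sigma_1^2\sigma_2^2\log(\sigma_2/\sigma_1)$ precisely when $|\mu_1-\mu_2|=2$, so making the assumption $\mu_{1,2}=\mp 1$ explicit is a needed clarification, not an extra hypothesis.

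The gap is the case $\sigma_1>\sigma_2$. There, as you note, the misclassification region is the interior of an interval, and after rescaling the event is $Y^2<\frac{(\sigma_1^2+\sigma_2^2)^2}{\sigma_1^2-\sigma_2^2}-(\sigma_1^2-\sigma_2^2)+2\sigma_1^2\sigma_2^2\log(\sigma_1/\sigma_2)$, i.e.\ the \emph{negation} of the stated threshold with the inequality reversed; the residual terms do not ``recombine into the single expression stated.'' Taken literally, the stated threshold is negative when $\sigma_1>\sigma_2$, so $\mathbb{P}(Y^2>T)=1$ there, which is not the error probability. Nor can the indicator $(2I(\sigma_1>\sigma_2)-1)$ absorb this: as you yourself observe in the first case, the sign of $\mathbb{E}Y$ is irrelevant for the law of $Y^2$ (in both cases $|\mathbb{E}Y|=2\sigma_1^2/\sqrt{|\sigma_1^2-\sigma_2^2|}$), so flipping the sign of the mean encodes nothing --- the operative difference between the two cases is the direction of the inequality, which appears nowhere in the stated formula. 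To complete the argument you must either restrict the claim to $\sigma_1<\sigma_2$ (the case in which the displayed formula is exactly what your computation yields, with $p_2$ obtained by symmetry), or state the $\sigma_1>\sigma_2$ case separately in the corrected form above; asserting that the sign bookkeeping reproduces the formula as written is the one step of your proposal that would fail.
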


\begin{wrapfigure}{r}{0.31\textwidth}
\vspace{-4.5mm}
    \centering
\includegraphics[width=0.8\linewidth]{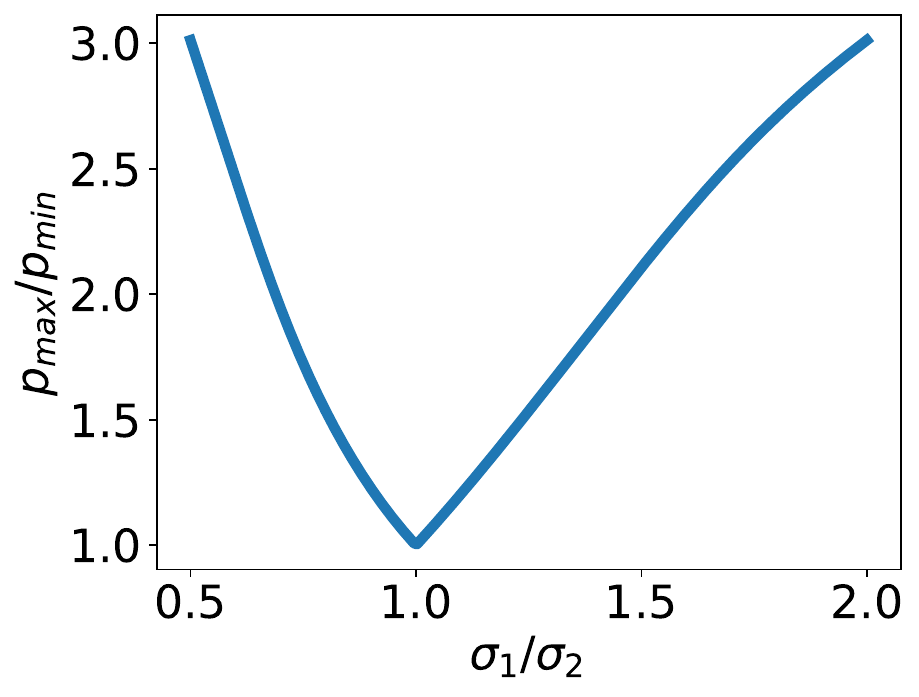}
\vspace{-3mm}
\caption{Fairness $\kappa$ vs.\ ratio between variances.} \label{fig:var_imb}
\vspace{-5mm}
\end{wrapfigure}
The classification error probability of class 2 is symmetric and thus omitted.
The above expression is quite complicated. To demonstrate the community bias issue, we define the classification fairness as $\kappa = \frac{\max\{p_1,p_2\} }{\min\{p_1,p_2\}}$ (larger $\kappa$ means more severe fairness issue), and investigate how the imbalance in data distribution affects $\kappa$. To this end, we fix the value of $\sigma_1^2+\sigma_2^2$, and vary the ratio $\sigma_1/\sigma_2$. The value of $\kappa$ corresponding to different ratios is plotted in Figure~\ref{fig:var_imb}. It is clear that the unfairness of the classifier is more severe when the variances of different classes are more imbalanced.

\section{Mitigating community Bias via Random Coarsening} \label{sec:Coarsening}
In this section, we present the main idea and theoretical justifications of our method to alleviate the community bias problem discussed in the last section. 
Our goal is to make the embedding distribution of sparse classes more concentrated. For this to happen, we first randomly partition the graph into clusters $S=\{S_1,\cdots, S_t\}$ according to some random process. Let $f$ be a GNN encoder, and $f(u)$ be the embedding of node $u$. We define the embedding for a cluster $S_i$ as $f(S_i)=\frac{1}{|S_i|}\sum_{v\in S_i}f(v)$. We use the loss
\begin{align}\label{eqn:concentrate}
    \sum_{S_i\in S}\sum_{u\in S_i}\Vert f(u)-f(S_i)\Vert^2
\end{align}
to regularize the GNN encoder, which encourages nodes in each cluster in the random partition to be more concentrated. In the following, we first show that if the distribution of the random partition, denoted by $\mathcal{P}$, satisfies certain requirements, the above loss has the implicit effect of pushing the embeddings of sparse classes more heavily. Then, we provide a specific random partition algorithm, namely random graph coarsening, which meets the requirements.

Following the setting in Section~\ref{sec:accImba}, we consider a binary classification problem. Assume that for a random partition drawn from $\mathcal{P}$, the probability that two nodes from class 1 (class 2) lie in the same cluster is $q_1$ ($q_2$), and the probability that two nodes from different classes are clustered together is $q_{12}$. 
We have the following lemma, the proof of which is provided in the appendix \ref{appendix:a}.

\begin{lemma}\label{lem:weightedReg}
Let $C_1$ and $C_2$ be the two classes of nodes. Suppose each cluster has the same size $s$, then
\begin{align*}
    &\mathbb{E}_{P\sim \mathcal{P}}\sum_{S_i\in S}\sum_{u\in S_i}s\Vert f(u)-f(S_i)\Vert^2 \\
    =&q_1\sum_{u,v\in C_1,u\neq v}\Vert f(u)-f(v)\Vert^2+q_2\sum_{u,v\in C_2,u\neq v}\Vert f(u)-f(v)\Vert^2 + q_{12}\sum_{u\in C_1,v\in C_2}\Vert f(u)-f(v)\Vert^2.
\end{align*}
\end{lemma}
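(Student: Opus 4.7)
The plan is to reduce the cluster-centroid variance on the left-hand side to a weighted sum over pairs of nodes, and then take the expectation one pair at a time. The first step is the classical ``variance equals mean squared pairwise distance'' identity: for any finite set $T$ with centroid $\bar{T}=\tfrac{1}{|T|}\sum_{u\in T}f(u)$,
\begin{equation*}
    \sum_{u\in T}\|f(u)-\bar{T}\|^2 = \frac{1}{2|T|}\sum_{u,v\in T}\|f(u)-f(v)\|^2,
\end{equation*}
where the right-hand sum is over ordered pairs. Applying this inside each cluster $S_i$ with $|S_i|=s$ and multiplying through by $s$ gives $s\sum_{u\in S_i}\|f(u)-f(S_i)\|^2=\tfrac12\sum_{u,v\in S_i}\|f(u)-f(v)\|^2$, so the LHS of the lemma equals $\tfrac12\,\mathbb{E}_{P\sim\mathcal{P}}\sum_{S_i}\sum_{u,v\in S_i}\|f(u)-f(v)\|^2$.

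Next I would rewrite the inner double sum as a single sum over ordered pairs $(u,v)$ of vertices, weighted by the indicator $\mathbf{1}[I_u=I_v]$ that $u$ and $v$ lie in the same block of the random partition $P$. Since the vertex set is finite, linearity of expectation lets me swap $\mathbb{E}_{P\sim\mathcal{P}}$ inside and replace this indicator by the coincidence probability $\mathbb{P}_{P\sim\mathcal{P}}[I_u=I_v]$, yielding
\begin{equation*}
    \tfrac12\sum_{u\neq v}\mathbb{P}_{P\sim\mathcal{P}}[I_u=I_v]\,\|f(u)-f(v)\|^2.
\end{equation*}
I would then partition the ordered pairs by class membership: this probability equals $q_1$ when $u,v\in C_1$, $q_2$ when $u,v\in C_2$, and $q_{12}$ for a mixed pair. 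The two mixed orientations $(u\in C_1,v\in C_2)$ and $(u\in C_2,v\in C_1)$ contribute the same quantity, while each within-class ordered sum double-counts the underlying unordered pairs; the global prefactor $\tfrac12$ absorbs both duplications and produces exactly the three terms on the right-hand side.

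The only real subtlety is this bookkeeping of ordered versus unordered pairs and the resulting coefficients on each of the three contributions. Apart from that, the argument is purely algebraic: no structural property of $\mathcal{P}$ enters beyond the three pairwise coincidence probabilities $q_1,q_2,q_{12}$, no property of $f$ is used at all, and the equal-cluster-size hypothesis enters only to pull the factor $s$ outside the cluster sum so that the variance identity leaves behind the clean prefactor $\tfrac12$.
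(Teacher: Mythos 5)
Your proof is correct and follows essentially the same route as the paper's: both first convert the within-cluster variance term into half the sum of squared pairwise distances inside each cluster, then use linearity of expectation to replace the same-cluster indicator $\mathbb{I}_{[I_u=I_v]}$ by the coincidence probabilities $q_1$, $q_2$, $q_{12}$ according to the class membership of the pair. The only difference is cosmetic bookkeeping (you work with ordered pairs and a global factor $\tfrac12$, the paper with unordered pairs), so nothing further is needed.
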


Now suppose $C_1$ is denser than $C_2$, and by the analysis in Section~\ref{sec:convergenceBias}, the embeddings in $C_1$ are likely to be more concentrated. To make the embedding densities of $C_1$ and $C_2$ more balanced, from Lemma~\ref{lem:weightedReg}, a preferred random partitioning algorithm should satisfy 
\begin{align}
    q_2 > q_1 > q_{12}. \label{eqn:q2-q1-q12}
\end{align}

Here, we design a random graph coarsening strategy to obtain such a reasonable partition, which can satisfy (\ref{eqn:q2-q1-q12}). Due to the homophily principle that similar nodes may be more likely to attach to each other than dissimilar ones, if we always merge nodes that are connected by an edge, $q_{12}$ should be less than $q_1$ and $q_2$. 
Therefore, our random graph coarsening strategy iteratively merging two (super) nodes through edge contraction to form more super nodes. Eventually, we can obtain a coarsened graph, where each supernode represents a cluster, and the nodes that constitute this supernode belong to the same partition. By utilizing the original graph and the coarsened graph, we can optimize loss (\ref{eqn:concentrate}) to mitigate community bias.

In our random graph coarsening, we contract each edge $(u,v)$ with probability proportional to some weight $\omega(u,v)$. Since on average, the degrees of nodes in $C_1$ are higher than $C_2$, to realize $q_2>q_1$, we adopted a simple weight function $\omega(u,v) = \frac{1}{d_u + d_v}$ for random edge selection during coarsening, ensuring a higher probability for low-degree edges to participate in the random coarsening process. Moreover, to prevent the formation of large supernodes, we use a threshold limiting the size of supernodes during the coarsening process. 
A detailed description of our random coarsening algorithm is provided in the appendix \ref{sec:algorithm}.

\section{Random Graph Coarsening Contrastive Learning}
Based on the theoretical insights from Section~\ref{sec:Coarsening}, we present a novel multi-view graph contrastive learning method \ourmodel, which can effectively alleviate the issue of community bias. Compared to existing GCL models, the key difference in our approach is to use random graph coarsening as the graph augmentation method and a specially designed loss function for this framework.

\subsection{Framework of \ourmodel}

The GCL method generates different views through graph augmentation, and then trains the model parameters by comparing the node embeddings from different views. The architecture of \ourmodel is presented in Figure \ref{fig:rgccl}.  In our \ourmodel, we regard the original graph as one view and the coarsened graph as another view, and then compare the corresponding nodes in these two views. In order to alleviate community bias, we need to push the node embedding $f(u)$ towards its corresponding cluster center embedding $f(S_u)$ according to the analysis in Section~\ref{sec:Coarsening}. Specifically, the super-node embedding computed in the coarsened view is used as the cluster center, and then each node embedding in the original graph and its corresponding super-node embedding in the coarsened graph is defined as a positive pair. By conducting contrastive learning on such positive pairs, we make the embedding distribution of sparse classes more concentrated, thereby reducing community bias.

\begin{figure}[t]
	\centering
\includegraphics[width=0.9\textwidth]{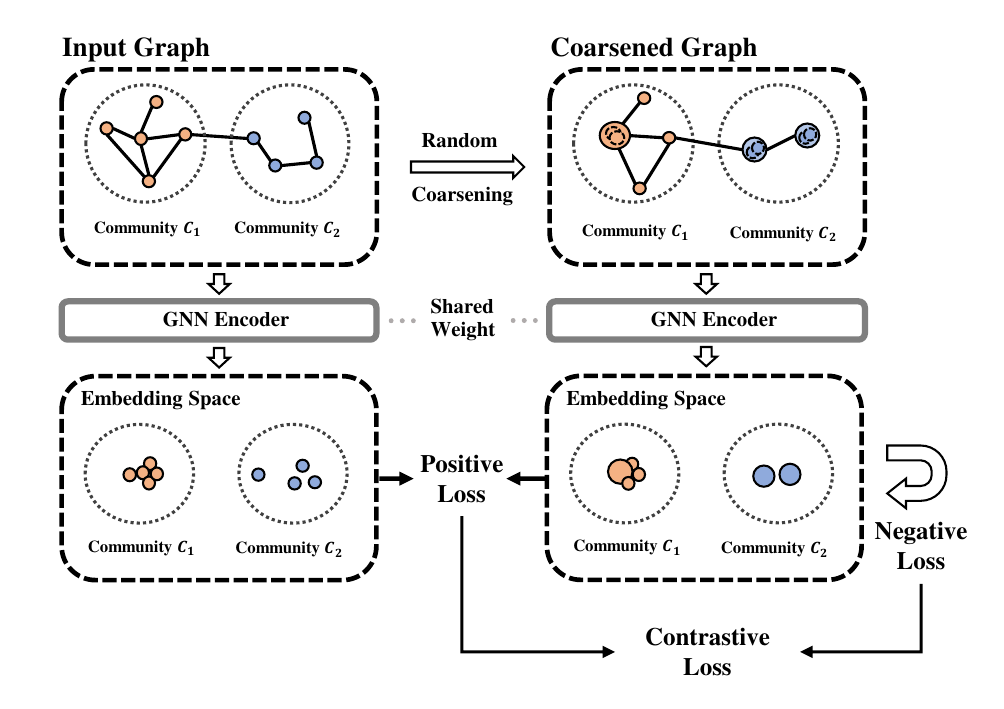}
\vspace{-6mm}
	\caption{The architecture of the proposed \ourmodel.}
\label{fig:rgccl}
\vspace{-3.0mm}
\end{figure}

In this work, we use a method different from the previous graph coarsening algorithms \citep{huang2021scaling} to construct the coarsened feature matrix.  A difference is that they construct the new feature matrix simply by summing i.e., given a partition matrix $P$, $X' =P^TX$. Here, a normalization based on degrees is applied: $X'=\widetilde{D}^{'-1}P^{T}\widetilde{D}X$, where $\widetilde{D}'$ and $\widetilde{D}$ are the degree matrices of the two views. The purpose is to ensure, as the number of graph propagations goes to infinity, the embedding of a node and its corresponding supernode in the coarsened view converge to the same point. More discussions are included in the appendix \ref{sec:error_bound}. On the other hand, graph coarsening is a method of data reduction, so using random graph coarsening as data augmentation can also reduce the resource consumption of GNN training.

Specifically, we apply the random graph coarsening algorithm to generate one graph augmentation $G'=(A',X')$ in each epoch during training. Then, we compute the coarsened graph and original graph embeddings by a GNN encoder with shared parameters: $H=\mathrm{GNN}(A', X', \theta)$ and $Z=\mathrm{GNN}(A, X, \theta)$. 

Recall that positive pairs are of the form $(u, S_u)$, where $S_u$ denotes the corresponding supernode of $u$. So we penalize $\|Z_u - H_{S_u}\|_F^2$. The loss function of positive pairs can be described more concisely in the matrix form. 
Let $Z'=P H$, then $Z'_u = H_{S_u}$. Therefore, the positive pair loss function is
\begin{equation}
\mathcal{L}_{pos}=\|Z-Z'\|^2_F=\|Z\|_F^2+\|Z'\|_F^2-2\mathsf{Tr}(Z^T Z').
\end{equation}
If $Z$ and $Z'$ are normalized appropriately, we only need to minimize $-2\mathsf{Tr}(Z^T Z')$.

On the other hand, nodes embedded from the original graph $G$ into $Z$ may encounter imbalance issues. Since we prioritize low-degree edges in the coarsening algorithm, the coarsened graph will typically have a more balanced embedding distribution. Therefore,  we do not pick negative pairs from the original graph and only compute a negative pair loss with the coarsened graph. There are various methods for selecting negative pairs and computing the loss; here we use the loss from \cite{zhang2020sce}, which is derived from the graph partition problem. More specifically, we randomly sample a small set of supernode pairs $\mathcal{N'} \subset V'\times V'$, and the negative pair loss function is:
\begin{equation}
\mathcal{L}_{neg}=\frac{\alpha}{\sum_{(i, j)\in \mathcal{N'}} n_i n_j\|h_i-h_j\|^2}.
\end{equation}
where $h_i$ and $h_j$ are the embeddings of supernodes $i$ and $j$, and $n_i$ and $n_j$ are the number of original nodes contained in supernodes $i$ and $j$.

Optimizing $\mathcal{L}_{neg}+\mathcal{L}_{pos}$ will be difficult due to the huge difference of the scale of $\mathcal{L}_{pos}$ and $\mathcal{L}_{neg}$. Therefore, we transform $\mathcal{L}_{pos}$ into the following form
\begin{equation}
\mathcal{L}_{pos}=\frac{\beta}{\mathsf{Tr}(Z^TZ')}.
\end{equation}

Finally, the loss function of our model is

\begin{equation}
    \mathcal{L} = \frac{\alpha}{\sum_{(i, j)\in \mathcal{N'}} n_i n_j\|h_i-h_j\|^2} + \frac{\beta}{\mathsf{Tr}(Z^TZ')} .
\end{equation}

\subsection{Generalizability of \ourmodel}
The generalizability of self-supervised learning methods has recently been theoretically analyzed in \citep{huang2021towards, wang2022grade}. They characterize it with three properties, namely the concentration of augmented data, the alignment of positive samples and the divergence of class centers. \cite{huang2021towards} also shows that the divergence of class centers is controlled by classic contrastive losses such as InfoNCE and the cross-correlation loss.

Following the proof of \cite{huang2021towards}, we show that our self-supervised loss $\mathcal{L}_{neg}$ can also upper bounds the divergence of class centers, thus classes will be more separable if our objective is optimized. We also investigate the concentration properties of our random coarsening data augmentation. The details are provided in the appendix \ref{appendix:d} due to the space constraints.

\section{Related Work}

\noindent\paragraph{Contrastive learning on graphs} Contrastive learning is a type of unsupervised learning technique that learns a representation of data by differentiating similar and dissimilar samples. 
It has been used in a variety of applications within the domain of graph data. DGI \citep{velivckovic2018deep} and MVGRL \citep{hassani2020contrastive} contrast node embeddings with graph embeddings using a loss function based on mutual information estimation \citep{belghazi18mutual, hjelm2018learning}. GRACE \citep{zhu2020grace} and its variants \citep{zhu2021gca, you2020graph} aim to maximize the similarity of positive pairs and minimize the similarity of negative pairs in augmented graphs in order to learn node embeddings. To counter the performance degradation induced by false negative pairs, CCA-SSG \citep{zhang2021ccassg} simplifies the loss function by eliminating negative pairs. In order to  reduce the computational complexity of contrastive loss, GGD \citep{zheng2022ggd} discriminates between two groups of node samples using binary cross-entropy loss. For a broader understanding, we recommend that readers refer to the latest surveys \citep{liu2022survey, xie2023review}.

\noindent\paragraph{Structural bias on graphs} Fair graph mining has attracted much more research attention since recent studies reveal that there are unfairness in a large number of graph mining models. Several notions of fairness have been proposed in recent survey \citep{dong2022fairness}, and structural bias is mainly manifested as degree bias. Prior studies \citep{feng2020invest, kang2022rawlsgcn, dong2022edits, liu2023on} have primarily concentrated on degree bias in supervised graph learning. 
GRADE \citep{wang2022grade} proposes a graph data augmentation method to mitigate the degree bias issue in unsupervised scenarios.

\noindent\paragraph{Graph coarsening} Recently, graph coarsening techniques have been used to address issues in graph neural networks \citep{fahrbach2020faster,deng2019graphzoom,huang2021scaling, jin2022conden}. Graph coarsening with spectral approximation guarantees are studied in \citep{li2018spectral,loukas2019graph,jin2020graph}. Graph coarsening can reduce the size of graph by combining the similar nodes, then the coarsened graph can be used for downstream tasks related to the graph. Existing graph coarsening techniques primarily strive to maintain the overall graph structure, resulting in a static and downsized coarsened graph. These methods overlooks the local structural bias, and typically involves massive computational costs.

\section{Experiments}

\subsection{Experimental Setup}

\noindent\paragraph{Datasets} The results are evaluated on six real-world datasets \citep{kipf2016semi, velivckovic2018deep, zhu2021gca, hu2020ogb}, Cora, Citeseer, Pubmed, Amazon Computer, Amazon Photo, and Ogbn-Arixv. Graph representation learning shows different degrees of community bias in these datasets. More detailed statistics of the seven datasets are summarized in the appendix \ref{sec:details}. On small-scale datasets Cora, Citeseer, Pubmed, Photo and Computers, performance is evaluated on random splits. We select 20 labeled nodes per class for training, while the remaining nodes are used for testing. All results on small-scale datasets are averaged over 50 runs, and standard deviations are reported.  For Ogbn-Arixv, we use fixed data splits as in previous studies \cite{hu2020ogb}.

\noindent\paragraph{Baselines} We compare our approach against nine representative graph embedding models:  Deepwalk \citep{Perozzi:2014:DOL:2623330.2623732}, DGI \citep{velivckovic2018deep}, GraphCL \citep{you2020graph}, GRACE \citep{zhu2020grace},  GCA \citep{zhu2021gca}, CCA-SSG \citep{zhang2021ccassg}, gCool \citep{li2022gcool}, GRADE\citep{wang2022grade} and GGD\citep{zheng2022ggd}. 
For all the baselines, we use the public code released in their previous papers. All models evaluate the learned representations by training and testing classifiers with the same settings.

\noindent\paragraph{Implementation details} 
For our model, we use the Adam optimizer and all embedding dimensions are set to 512. For the graph coarsening operation, the coarsening rates are set to $[0.3, 0.5,0.5,0.7,0.7,0.7]$ respectively and the threshold for the supernode size is set to 10 across all datasets. More experimental details are listed in the appendix \ref{sec:details}.

\subsection{Results and Analysis}

\begin{table}[thbp]\tiny
	\caption{Summary of results in terms of mean node classification accuarcy and standard deviation over 50 runs on five datasets. The training set contains 20 labeled nodes per class. The highest accuracy in each column is highlighted in bold and the runner ups are underlined.}	\label{tab:node}
	\begin{tabular}{lcccccccccc}\toprule
		\multicolumn{1}{c}{\multirow{2}*{\textbf{Method}}} & \multicolumn{2}{c}{\textbf{Cora}}& \multicolumn{2}{c}{\textbf{Citeseer}}& \multicolumn{2}{c}{\textbf{Pubmed}}& \multicolumn{2}{c}{\textbf{Photo}}& \multicolumn{2}{c}{\textbf{Computers}}\\ 
		\cmidrule(r){2-11}
		&Acc &Macro-F1&Acc &Macro-F1&Acc &Macro-F1&Acc&Macro-F1&Acc&Macro-F1\\ \midrule 
		\textbf{Deepwalk} & 67.2$\pm$1.7& 66.5$\pm$1.5&40.0$\pm$2.1 &38.3$\pm$2.0 & 66.9$\pm$2.8&65.6$\pm$2.7&85.1$\pm$1.2&83.9$\pm$1.2&77.3$\pm$1.6&77.2$\pm$1.5\\
		\textbf{DGI} &78.5$\pm$0.9 &77.2$\pm$0.9 &70.4$\pm$1.0 &63.6$\pm$1.4 & 72.5$\pm$3.3 &72.5$\pm$3.3&87.9$\pm$1.3&86.2$\pm$1.3&79.7$\pm$1.6&78.4$\pm$1.3\\
		\textbf{GraphCL} &78.3$\pm$1.5 &76.7$\pm$1.7 &70.6$\pm$1.2 &64.1$\pm$1.4 &71.7$\pm$3.5 &71.7$\pm$3.6&88.3$\pm$1.3&86.7$\pm$1.2&79.7$\pm$1.4&78.5$\pm$1.1\\ 
		\textbf{GRACE} & 74.4$\pm$2.0&72.5$\pm$2.0 &68.9$\pm$1.0 &61.2$\pm$1.1 & \underline{76.1}$\pm$2.8& \underline{75.9}$\pm$2.7&85.1$\pm$1.6&83.5$\pm$1.4&76.2$\pm$1.9&75.2$\pm$1.5\\
		\textbf{GCA} &78.6$\pm$1.2&77.2$\pm$1.2 &68.8$\pm$1.5 &65.3$\pm$1.4 &75.4$\pm$3.0 &75.5$\pm$2.9&87.8$\pm$1.2&86.2$\pm$1.3&79.1$\pm$2.4&77.9$\pm$2.0\\
		\textbf{CCA-SSG} & 79.2$\pm$1.4 &78.0$\pm$1.4 & \underline{71.8}$\pm$1.0&\underline{66.3}$\pm$1.1 & 76.0$\pm$2.8&75.8$\pm$2.7&\underline{88.7}$\pm$1.1&\underline{86.9}$\pm$3.2&\textbf{82.7}$\pm$1.0&76.9$\pm$3.7\\
		\textbf{gCooL} &78.5$\pm$1.3 &77.1$\pm$1.1 &68.6$\pm$1.4 &64.9$\pm$1.3&75.5$\pm$3.0 &75.3$\pm$2.9&87.9$\pm$1.3&85.9$\pm$1.4&79.8$\pm$1.7&78.1$\pm$1.3\\
		\textbf{GRADE} & 81.5$\pm$1.0&80.2$\pm$1.0 &67.6$\pm$1.5 &64.2$\pm$1.3 &74.5$\pm$2.7 &74.5$\pm$2.6 & 87.1$\pm$1.2 & 80.4$\pm$3.0&75.8$\pm$1.2 & 64.7$\pm$3.1\\
		\textbf{GGD} &\underline{81.9}$\pm$0.9 &\underline{80.5}$\pm$0.8 & 70.1$\pm$1.3&66.2$\pm$1.1 &74.7$\pm$3.2 &74.4$\pm$3.1&87.2$\pm$1.5&85.4$\pm$1.4&80.4$\pm$1.8&\underline{80.0}$\pm$1.2\\
		\midrule 
		\textbf{\ourmodel} & \textbf{83.1}$\pm$0.8 & \textbf{82.0}$\pm$0.8 &\textbf{72.4}$\pm$0.9 & \textbf{67.7}$\pm$0.8 & \textbf{77.3}$\pm$2.9  & \textbf{77.1}$\pm$2.7 & \textbf{89.6}$\pm$1.2&\textbf{88.2}$\pm$1.2&\underline{81.2}$\pm$1.8&\textbf{80.2}$\pm$1.2 \\
		\bottomrule
	\end{tabular}
\end{table}

Table \ref{tab:node} reports the node classification accuracy and standard deviation on small datasets. In these data splits, we employ Acc and Macro-F1 as metrics to assess the overall performance of models. If Macro-F1 significantly drops compared to Acc, it indicates that the model's performance is not balanced across different communities. It is evident that the performance of \ourmodel~outperforms other GCL models within the given experimental framework. 
Mostly, \ourmodel~ surpasses the runner-up by an advantage of 1\%-2\%. Notably, although \ourmodel's Acc score on the Computers dataset is slightly lower than that of CCA-SSG, our Macro-F1 score is significantly higher than that of CCA-SSG. This suggests that CCA-SSG has generated significant community bias on Computers. Clearly, our specially designed model can improve the overall classification performance effectively while resolving the issue of community bias.

\begin{figure}[ht]
\centering
\subfigure[Cora]{\includegraphics[width=0.32\textwidth]{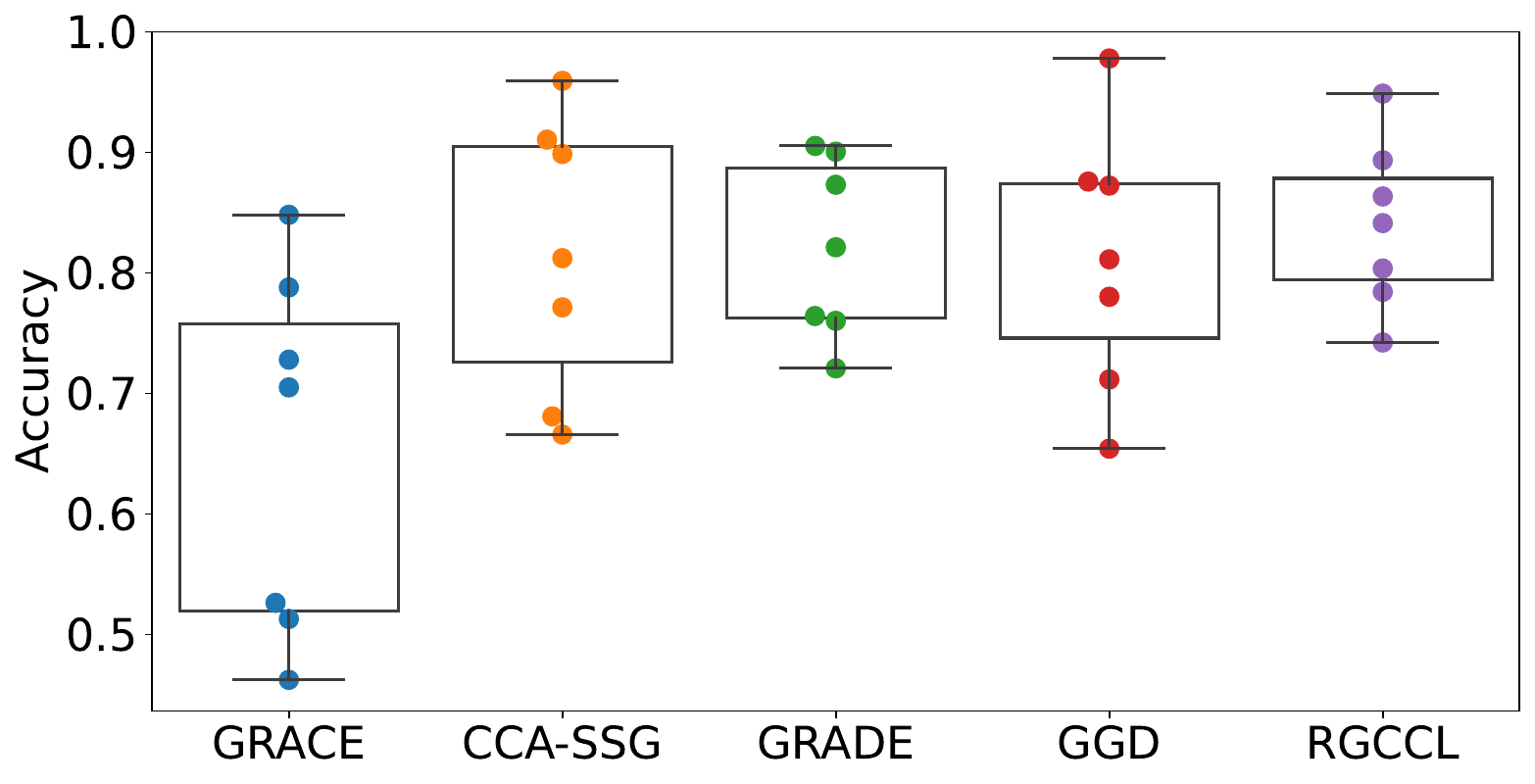}}
\subfigure[Citeseer]{\includegraphics[width=0.32\textwidth]{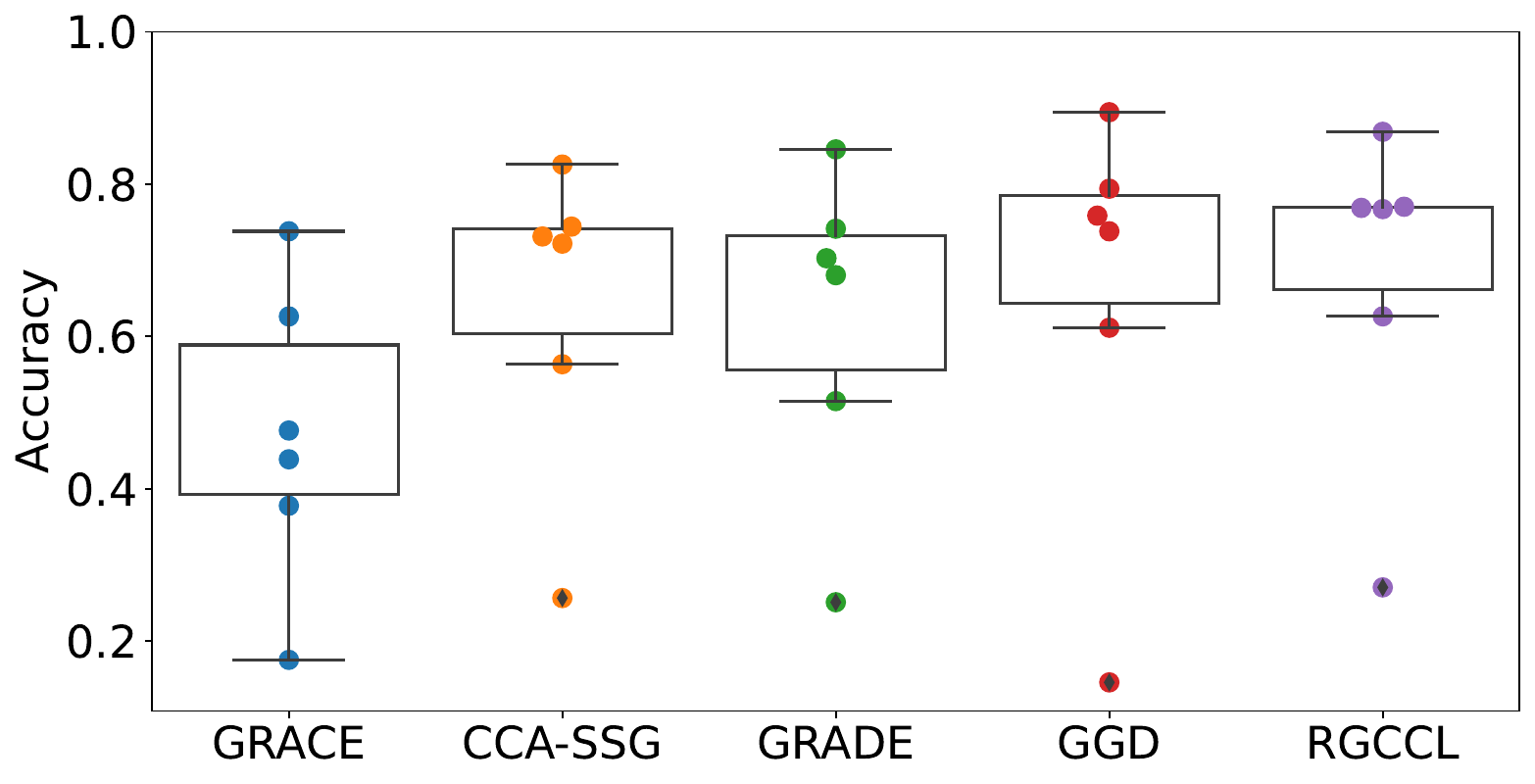}}
\subfigure[PubMed]{\includegraphics[width=0.32\textwidth]{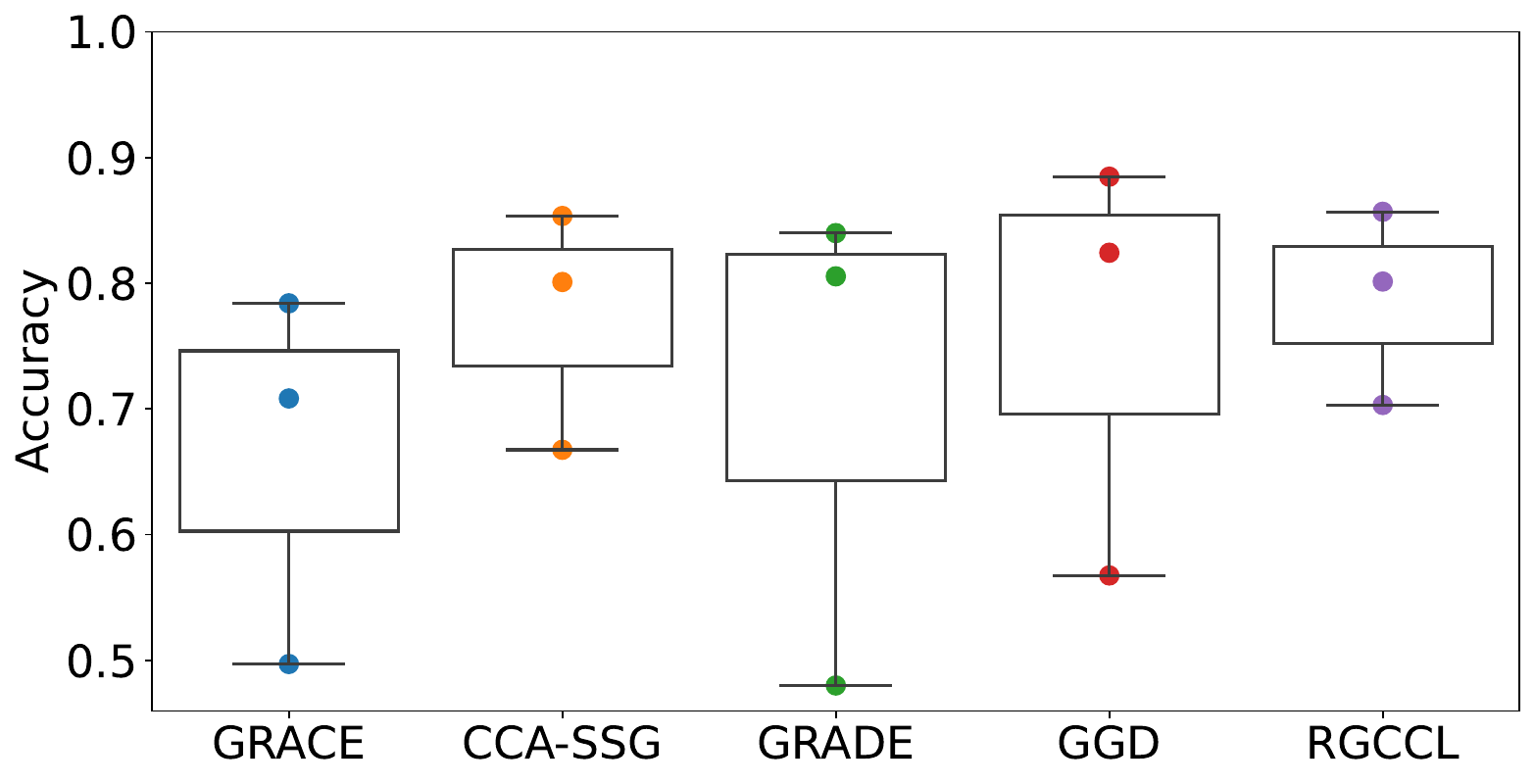}}
\vspace{-3mm}
\caption{Box plots of the average accuracy w.r.t. community for four baselines and RGCCL on the Cora, Citeseer and Pubmed dataset.}
\label{fig:box}
\end{figure}

\noindent\paragraph{Visualization} To further illustrate that RGCCL effectively mitigates the issue of community bias, we have visualized the performance of each model across different communities. For each community, we calculate the average performance of the model in this community and then draw a box plot based on these accuracies. This figure (Figure \ref{fig:box}) provides a visualization of the node classification accuracy in different communities on the Cora, Citeseer and PubMed. It is clear that our model has the smallest performance difference across different communities, while also having the best overall performance.

\begin{wraptable}{r}{0.5\textwidth}\scriptsize
\vspace{-6.5mm}
    \centering
\caption{The average and standard deviation of community density. A smaller average indicates higher embedding quality, while a smaller standard deviation suggests less community bias.}\label{tab:density}  
\begin{tabular}{lcccc}\toprule
\multicolumn{1}{c}{\multirow{2}*{\textbf{Method}}} &  \multicolumn{2}{c}{\textbf{Cora}}&  \multicolumn{2}{c}{\textbf{Citeseer}}\\ 
&Ave &Std&Ave &Std\\ \midrule 
    DGI &0.3782& 0.0294& 0.3402 & 0.0255\\
    GRACE & 0.2114 & 0.0252 & 0.1715 & 0.0163\\
    CCA-SSG & 0.2817 & 0.1031 &0.1672 & 0.0109  \\
    GRADE & 0.1983& 0.0352& 0.2141&0.0243\\
    GGD  &0.3183&0.0541&0.3297&0.0329\\
   	\midrule 
    \ourmodel&\textbf{0.0942} & \textbf{0.0084} & \textbf{0.1401}& \textbf{0.0097} \\
\bottomrule
\end{tabular}
\vspace{-3.0mm}
\end{wraptable}

\noindent\paragraph{Density} To further illustrate the high embedding quality of RGCCL, we present statistics of the learned representations and compare them with other popular GCL methods. Firstly, we measure the concentration of the embedding for each community by calculating their mean distances from the centroid (i.e., $V_C=\frac{1}{|C|}\sum_{i\in C}||z_i - \frac{1}{|C|} \sum_{i\in C} z_i||$ for community $C$). Next, we compute the average and standard deviation of $V_C$ across all communities. A smaller average value indicates that the representations for each community are more concentrated, which in turn makes the classification boundary easier to learn. A smaller standard deviation suggests a more balanced embedding density, which has been shown to be beneficial for classification fairness (as discussed in Section \ref{sec:accImba}). In Table \ref{tab:density}, we present the results of four baseline methods as well as RGCCL on Cora and Citeseer. RGCCL demonstrates not only more concentrated embeddings for each community but also the most balanced embedding density. These results strongly support our theory.

\begin{wraptable}{r}{0.35\textwidth}\scriptsize
\vspace{-6.5mm}
\centering
\caption{Summary of results in terms of accuracy on Ogbn-Arxiv.}\label{tab:large}  
\begin{tabular}{lcc}
\toprule
\multicolumn{1}{c}{\multirow{2}*{\textbf{Method}}} &  \multicolumn{2}{c}{\textbf{Ogbn-Arxiv}}\\ 
\cmidrule(r){2-3}
&Acc &Macro-F1\\ \midrule 
    DGI&68.8$\pm$0.2&46.8$\pm$0.4 \\
    GRACE&68.4$\pm$0.1&46.1$\pm$0.3  \\
    CCA-SSG&69.8$\pm$0.2&46.5$\pm$0.6 \\
    GRADE &67.7$\pm$0.2&45.0$\pm$0.4 \\
    GGD & 70.7$\pm$0.3&48.5$\pm$0.4 \\
   	\midrule 
\ourmodel&\textbf{71.7}$\pm$0.1&\textbf{50.6}$\pm$0.2\\
\bottomrule
\end{tabular}
\vspace{-6.5mm}
\end{wraptable}

\noindent\paragraph{Scalability} Another benefit of \ourmodel~is its small memory usage. This efficiency is primarily because the random graph coarsening is preprocessed on the CPU, resulting in a coarsened graph notably smaller than the original. Experiments were also conducted on the Arxiv dataset, which is a large-scale dataset for most GRL models. Larger size makes sub-sampling necessary for training on some baselines, but our RGCCL can be trained directly on the full graph. As shown in Table \ref{tab:large}, the Acc and Macro-F1 of \ourmodel~both exceed those of other baselines. For reports on memory usage, please refer to the Appendix \ref{sec:details}.

\noindent\paragraph{Effectiveness of different coarsening ratios}
We studied the effect of random coarsening ratio on model performance. The random coarsening ratio refers to the proportion of the number of nodes reduced relative to the total number of nodes. Table \ref{tab:ratio} shows the results of different coarsening ratios. According to our observations, the changes in the coarsening rate have a more significant impact on the Macro-F1. The model performs better when the coarsening rate is between 30\% and 50\%. This is consistent with the conclusions of our theoretical analysis.

\begin{table}[!htbp]\small
\setlength{\abovecaptionskip}{-0.2cm}
	\caption{The performance of different coarsening ratio.}\label{tab:ratio}
	\centering
\begin{tabular}{lcccccc}\toprule
\multicolumn{1}{c}{\multirow{2}*{\textbf{Ratio}}}& \multicolumn{2}{c}{\textbf{Cora}} & \multicolumn{2}{c}{\textbf{Citeseer}}& \multicolumn{2}{c}{\textbf{Pubmed}}\\ 
\cmidrule(r){2-7}
  &Acc &Macro-F1&Acc &Macro-F1&Acc &Macro-F1\\ \midrule 
  $r=0.3$& \textbf{83.1}$\pm$0.8 & \textbf{82.0}$\pm$0.8  &\textbf{72.4}$\pm$0.9  & 67.6$\pm$0.8 &77.1$\pm$2.9  & 76.9$\pm$2.7\\
  $r=0.5$&82.8$\pm$0.8  & 81.8$\pm$0.8 & \textbf{72.4}$\pm$0.9 & \textbf{67.7}$\pm$0.8  &\textbf{77.3}$\pm$2.9  & \textbf{77.1}$\pm$2.7   \\
$r=0.7$&82.5$\pm$0.9  & 81.4$\pm$0.9 &  72.1$\pm$0.7  & 67.0$\pm$0.7&77.1$\pm$2.8  & 76.8$\pm$2.6  \\
  $r=0.9$&82.4$\pm$0.9  & 81.1$\pm$1.0&  72.0$\pm$0.7  & 66.9$\pm$0.7&77.0$\pm$2.8  & 76.9$\pm$2.6 \\
\bottomrule
\end{tabular}
\end{table}

\section{Conclusion}
In this paper, we study the community bias amplification in unsupervised graph representation learning. We presents a novel perspective on this problem through the lens of convergence bias and embedding density imbalance, and a comprehensive theoretical analysis is provided. Based on our theoretical insights, we propose to use random graph coarsening to mitigate this issue, and give theoretical guidance on how to design effective random coarsening algorithms. Finally, a graph contrastive learning model is proposed which utilize random graph coarsening as graph augmentation and a loss function is designed for this new form of graph augmentation.

\bibliography{paper}
\bibliographystyle{iclr2024_conference}

\newpage
\appendix

\section{Variance analysis on CSBM} \label{appendix:csbm}
We consider a two-block CSBM denoted as $\mathcal{G}(n, p_1, p_2, q, \mu_1, \mu_2, \sigma^2)$. Here, $A \in \mathbb{R}^{n \times n}$ represents the adjacency matrix of the graph, and $X \in \mathbb{R}^{n \times d}$ represents the feature matrix. In this model, for any two nodes in the graph, the intra-class probability is denoted as $p_i$ ($i=1,2$), and the inter-class probability is denoted as $q$. Additionally, each node's initial feature is independently sampled from a Gaussian distribution $\mathcal{N}(\mu_i, \sigma^2)$.

Our objective is to estimate the variance of node embeddings within each class. Formally, we aim to compute:
\begin{equation}
	\mathbb{E} \left[ \left \Vert D^{-1}AX - \mathbb{E}(D^{-1}AX) \right \Vert_F^2 \right].
\end{equation}

\begin{assumption} [Structural Information]
	$p_1, p_2, q=\Omega(\frac{\log n}{n})$ and $p_1>p_2>q$.
\end{assumption}

\begin{lemma} \label{lemma:jl}
	Assume that $d>\frac{C}{\epsilon^2} \log n$, we have $(1-\epsilon) \Vert A-  \mathbb{E} A   \Vert_F^2 \le   \Vert AX-  \mathbb{E} AX \Vert_F^2  \le 	(1+\epsilon)     \Vert A-  \mathbb{E} A  \Vert_F^2$ with probability at least $1-2\exp(-c{\epsilon^2}d)$.
\end{lemma}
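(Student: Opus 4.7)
The plan is to read the lemma as a Johnson--Lindenstrauss-style concentration statement, where right-multiplication by the random feature matrix $X$ approximately preserves the Frobenius norm of $A-\mathbb{E}A$, with the role of the JL ``projection dimension'' played by the feature dimension $d$. The condition $d>C\log n/\epsilon^2$ and the failure probability $2\exp(-c\epsilon^2 d)$ are exactly the hallmarks of a Gaussian JL argument, so the strategy is to reduce to that.

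First I would use independence of $A$ and $X$ together with the centering $\tilde A:=A-\mathbb{E}A$, $\tilde X:=X-\mathbb{E}X$ to decompose
\[
AX-\mathbb{E}AX \;=\; \tilde A\tilde X \;+\; \tilde A\,\mathbb{E}X \;+\; \mathbb{E}A\,\tilde X.
\]
The first term $\tilde A\tilde X$ is the JL-type contribution and should drive the bound, while the other two are lower-order cross terms that need separate control.

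Second, I would condition on $A$ and apply chi-squared concentration to the main term. The $d$ columns of $\tilde X$ are i.i.d.\ $\mathcal{N}(0,\sigma^2 I_n)$, so
\[
\Vert\tilde A\tilde X\Vert_F^{2} \;=\; \sum_{j=1}^{d}\Vert\tilde A\tilde X_{:,j}\Vert^{2}
\]
is a sum of $d$ i.i.d.\ generalised quadratic forms, each with mean $\sigma^{2}\Vert\tilde A\Vert_F^{2}$. The Laurent--Massart tail bound then delivers a $(1\pm\epsilon)$ deviation with probability at least $1-2\exp(-c\epsilon^{2}d)$, which under $d\ge C\log n/\epsilon^{2}$ is the announced bound; a Hanson--Wright inequality is an alternative that avoids conditioning. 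The cross terms $\tilde A\,\mathbb{E}X$ and $\mathbb{E}A\,\tilde X$ are controlled by the observation that $\mathbb{E}X$ has rank at most $2$ (there are only two class means) and that, under $p_{1},p_{2},q=\Omega(\log n/n)$, standard Bernoulli concentration gives $\Vert\tilde A\Vert_{\mathrm{op}}=O(\sqrt{np_{\max}})$, which combined with a Frobenius-versus-operator inequality shows these cross terms are of lower order than $\Vert\tilde A\Vert_F^{2}$.

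The main obstacle I anticipate is matching the scaling: the chi-squared argument naturally gives concentration around $d\sigma^{2}\Vert\tilde A\Vert_F^{2}$ rather than $\Vert\tilde A\Vert_F^{2}$, so I would have to track the implicit normalisation of the CSBM feature matrix (so that the $d\sigma^{2}$ factor is absorbed into the way $X$ is scaled) and verify that under that same normalisation the cross terms are indeed negligible rather than contributing constants that spoil the $(1\pm\epsilon)$ window. A secondary subtlety is making sure the constants $c,C$ are absolute and do not inherit hidden dependence on $p_{1},p_{2},q$; this is automatic for Laurent--Massart, but requires the operator-norm bound on $\tilde A$ and the rank bound on $\mathbb{E}X$ to be uniform in the CSBM parameters, which is what the density assumption $p_{1},p_{2},q=\Omega(\log n/n)$ is there to ensure.
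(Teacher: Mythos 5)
Your proposal is correct in spirit and, at its core, it is the same mechanism the paper uses: the paper's entire proof is a one-line appeal to the Johnson--Lindenstrauss lemma, viewing $X$ as a random projection applied to the rows of $A-\mathbb{E}A$, with the stated failure probability $1-2\exp(-c\epsilon^2 d)$ coming directly from the JL citation. Your conditioning-on-$A$ argument with chi-squared (Laurent--Massart) or Hanson--Wright tails is precisely the standard proof of that JL statement, so the main term $\tilde A\tilde X$ is handled identically in substance; what you add is the explicit decomposition $AX-\mathbb{E}AX=\tilde A\tilde X+\tilde A\,\mathbb{E}X+\mathbb{E}A\,\tilde X$ to account for the fact that the CSBM features are not mean-zero, together with the observation that the lemma only makes sense under an implicit normalization absorbing the $d\sigma^2$ factor. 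The paper silently ignores both issues (it treats $X$ as if it were a normalized, mean-zero JL matrix and identifies $AX-\mathbb{E}AX$ with $(A-\mathbb{E}A)X$), so your version is strictly more careful. One caution on your cross-term control: rank-$2$ structure of $\mathbb{E}X$ plus $\Vert\tilde A\Vert_{\mathrm{op}}=O(\sqrt{np_{\max}})$ gives $\Vert\tilde A\,\mathbb{E}X\Vert_F^2=O(np_{\max}\cdot n\Vert\mu\Vert^2)$, which is negligible against the main term $d\sigma^2\Vert\tilde A\Vert_F^2\asymp d\sigma^2 n^2 p$ only if $\Vert\mu\Vert^2=o(d\sigma^2)$; if the class means have entries of constant order this ratio is $\Theta(1)$, so you would need either that extra assumption on the signal-to-noise scaling or to fold the mean contribution into the statement. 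This is not a defect relative to the paper, which does not address the term at all, but it should be stated explicitly if you want the $(1\pm\epsilon)$ window to survive.
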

\begin{proof}
Let $X\in \mathbb{R}^{n\times d}$ be the projection matrix which maps each vector $A_i\in \mathbb R^n$ to a $d$-dimensional vector $A_i X\in \mathbb R^d$. Then according to the Johnson\text{-}Lindenstrauss lemma \citep{johnson1984extensions}, given some tolerance $\epsilon$, it holds with probability at least $1-2\exp(-c{\epsilon^2}d)$.
\end{proof}
\begin{lemma} \label{lemma:er}
	For a given Erd\H{o}s-Rényi (ER) graph $\mathcal G(n, p)$, there exists a constant $C$ such that $\Vert A- \mathbb{E} A \Vert \lesssim C \sqrt{np }$
	with probability at least $1-n^{-r}$ for any $r>0$.
\end{lemma}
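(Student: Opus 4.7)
The plan is to bound the operator norm of the centered symmetric random matrix $M := A - \mathbb{E}A$, whose upper-triangular entries are independent, mean zero, bounded by $1$ in absolute value, and have variance at most $p(1-p) \le p$. I would combine an $\epsilon$-net argument over the unit sphere with a Bernstein-type tail bound for the bilinear form $u^\top M v$ on fixed vectors.

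As a first pass, matrix Bernstein gives a quick estimate: decompose $M = \sum_{i < j}(A_{ij}-p)(e_i e_j^\top + e_j e_i^\top)$ as a sum of independent rank-$2$ symmetric matrices of operator norm at most $1$, with matrix variance $\bigl\|\sum_{i < j} \mathbb{E}[X_{ij}^2]\bigr\| \le np$. Matrix Bernstein then yields $\|M\| \le C(\sqrt{np\log n} + \log n)$ with probability at least $1-n^{-r}$. Under the assumption $p = \Omega(\log n / n)$ from the preceding text, this reads $O(\sqrt{np\log n})$, which is a factor $\sqrt{\log n}$ looser than the claim.

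To remove that log, I would follow the Feige--Ofek/Vu approach. Fix a $1/4$-net $\mathcal{S}$ of the unit sphere $S^{n-1}$ with $|\mathcal{S}| \le 9^n$, so that $\|M\| \le 2\sup_{u,v\in\mathcal{S}} u^\top M v$. Split each net vector into a ``light'' part (coordinates where $|u_i| \le \sqrt{p/n}$) and a ``heavy'' complement. On the light/light, light/heavy, and heavy/light blocks, Bernstein's inequality with a small boundedness parameter combined with the union bound over $\mathcal{S}$ yields $O(\sqrt{np})$ with no extra log factor, since the boundedness parameter is small enough to absorb the entropy of the net. The heavy/heavy contribution is handled via a discrepancy inequality for edge counts: with probability $\ge 1 - n^{-r}$, $\bigl| e(S,T) - p|S||T|\bigr| = O\bigl(\sqrt{p|S||T|\log(en/|S|)} + \log n\bigr)$ uniformly over all vertex subsets $S,T \subseteq V$, and this bound translates into an $O(\sqrt{np})$ estimate on the heavy/heavy block via the standard bipartite bookkeeping.

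The main obstacle is proving this uniform discrepancy bound: one must union over all $\binom{n}{s}\binom{n}{t}$ pairs $(S,T)$ at every scale, applying Chernoff to $e(S,T)$ and carefully balancing the tail decay against the combinatorial count. Once the discrepancy step is in hand, assembling the four blocks gives $\|M\| \lesssim C\sqrt{np}$ with probability at least $1 - n^{-r}$, as claimed. In practice I would cite Feige--Ofek (2005, Theorem 1.1) or Vu (2005) as a black box for the sharp $O(\sqrt{np})$ bound rather than reproduce the full entropy calculation.
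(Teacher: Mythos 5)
Your proposal is correct, but it takes a genuinely different route from the paper. The paper disposes of this lemma in one line by invoking Corollary~3.12 of Bandeira and van Handel (2016): for a symmetric matrix with independent, centered, bounded entries, the operator norm is $\lesssim \max_i \bigl(\sum_j \mathbb{E}A_{ij}^2\bigr)^{1/2} + C\sqrt{\log n}$ with polynomially small failure probability, and under the standing assumption $p = \Omega(\log n / n)$ both terms are $O(\sqrt{np})$, which is exactly the claim. You instead rebuild the bound via the Kahn--Szemer\'edi/Feige--Ofek machinery: an $\epsilon$-net over the sphere, a light/heavy coordinate split, Bernstein on the light blocks, and a uniform discrepancy estimate for $e(S,T)$ on the heavy/heavy block (or, as you say, citing Feige--Ofek or Vu as a black box). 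Your route is sound and correctly identifies why plain matrix Bernstein loses a $\sqrt{\log n}$ factor, and you rightly note that the sharp $O(\sqrt{np})$ bound needs $np \gtrsim \log n$, which the paper's Assumption on structural information supplies. What each buys: your argument is the classical, in-principle self-contained combinatorial proof, but the heavy/heavy discrepancy step is a substantial entropy computation that you ultimately outsource anyway; the paper's citation is a modern general-purpose inequality that gives the same bound (with explicit constants and tails) in one step, at the cost of relying on a deeper external result. Either is acceptable as a proof of the lemma as stated.
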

\begin{proof}
	By using corollary 3.12 from \citep{bandeira2016sharp}, we obtain sharper bounds.
\end{proof}
\begin{lemma} [Sharp concentration, Lemma \ref{lemma:er} and Theorem 3 from \citep{wu2022non}]
	There exists a constant $C$ such that for sufficiently large $n$ , with probability at least $1-{O}(n^{-r})$,
	\begin{equation}
		 \left \Vert D^{-1}A - \mathbb{E}(D^{-1}A) \right \Vert_F  \lesssim \frac{C}{\sqrt{np}}.
	\end{equation}
\end{lemma}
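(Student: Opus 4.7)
The plan is to control $D^{-1}A - \mathbb{E}(D^{-1}A)$ by perturbation around a deterministic proxy in which $D$ is replaced by its expected value. Concretely, setting $\bar D := np\,I$ (or the diagonal of expected degrees if the intra-block probabilities differ), I would split
\begin{equation*}
D^{-1}A-\mathbb{E}(D^{-1}A)=\bar D^{-1}(A-\mathbb{E}A)+(D^{-1}-\bar D^{-1})A+\bigl(\bar D^{-1}\mathbb{E}A-\mathbb{E}(D^{-1}A)\bigr),
\end{equation*}
so that each piece can be handled by an independent concentration or perturbation ingredient.

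For the leading stochastic term $\bar D^{-1}(A-\mathbb{E}A)$, Lemma~\ref{lemma:er} supplies $\|A-\mathbb{E}A\|\lesssim\sqrt{np}$ with probability at least $1-O(n^{-r})$, and $\|\bar D^{-1}\|=(np)^{-1}$ yields a contribution of order $(np)^{-1/2}$. For the degree perturbation term, a Chernoff/Bernstein bound gives $\max_i|d_i-np|\lesssim\sqrt{np\log n}$ on the same good event, and the identity $D^{-1}-\bar D^{-1}=-D^{-1}(D-\bar D)\bar D^{-1}$ together with $\|A\|\lesssim np$ forces this term to be of lower order than $(np)^{-1/2}$ under the density assumption $p=\Omega(\log n/n)$. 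The third, purely deterministic bias is controlled by a second-order Taylor expansion of $x\mapsto 1/x$ around $x=np$, using $\mathbb{E}(d_i-np)^2=O(np)$; it is again of lower order. A union bound over the two high-probability events (spectral concentration of $A$ and uniform concentration of degrees) then delivers the stated $1-O(n^{-r})$ guarantee.

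The main obstacle is making the norms line up: Lemma~\ref{lemma:er} is an operator-norm statement, whereas the conclusion is phrased in the Frobenius norm, and the naive inequality $\|\cdot\|_F\le\sqrt n\,\|\cdot\|$ would cost a spurious $\sqrt n$ factor and destroy the $(np)^{-1/2}$ rate. The resolution is that both $A-\mathbb{E}A$ and the residual $D^{-1}A-\mathbb{E}(D^{-1}A)$ concentrate effectively on the low-rank subspace spanned by the community indicators, where Frobenius and operator norms differ only by constants. This is exactly where Theorem~3 of \citep{wu2022non} is invoked: it upgrades the operator-norm concentration of the Erd\H{o}s--R\'enyi adjacency to the sharper normalized-adjacency Frobenius bound through an entrywise analysis, and plugging that refinement into the decomposition above closes the argument.
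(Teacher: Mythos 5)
The paper itself gives no derivation here: the lemma is justified entirely by its bracketed attribution, i.e.\ by combining Lemma~\ref{lemma:er} with Theorem~3 of \citep{wu2022non}. So the question is whether your sketch is a sound independent argument, and it has a genuine gap exactly at the step you flag: passing from the operator norm to the Frobenius norm. Your decomposition and the handling of the degree-perturbation and bias terms are the standard route to an \emph{operator-norm} bound of order $(np)^{-1/2}$, and that part is fine. But the proposed bridge --- that $A-\mathbb{E}A$ and the residual ``concentrate on the low-rank subspace spanned by the community indicators, where Frobenius and operator norms differ only by constants'' --- is false. The centered adjacency matrix is dense noise with stable rank of order $n$: with high probability $\Vert A-\mathbb{E}A\Vert_F^2=\sum_{i,j}(A_{ij}-\mathbb{E}A_{ij})^2\asymp n^2p$ while $\Vert A-\mathbb{E}A\Vert\asymp\sqrt{np}$, so the two norms differ by a factor of order $\sqrt{n}$, which is precisely the factor you need to avoid. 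Concretely, your leading term satisfies $\Vert \bar D^{-1}(A-\mathbb{E}A)\Vert_F\asymp (np)^{-1}\cdot n\sqrt{p}=p^{-1/2}$, not $(np)^{-1/2}$, and a row-wise estimate of $(D^{-1}-\bar D^{-1})A$ is of comparable size; so the decomposition as you set it up cannot deliver the stated Frobenius-norm rate, only the operator-norm rate.

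Moreover, the way you ``close'' the argument --- invoking Theorem~3 of \citep{wu2022non} to upgrade the operator-norm bound --- is not a proof but a restatement of the very citation the lemma's header already makes; in a blind reconstruction that step is doing all the work you were supposed to supply. If you want a self-contained argument for a bound on $\Vert D^{-1}A-\mathbb{E}(D^{-1}A)\Vert$ at rate $(np)^{-1/2}$, carry your three-term decomposition through in the operator norm (Lemma~\ref{lemma:er} plus Chernoff for the degrees suffices, under $p=\Omega(\log n/n)$); if the Frobenius norm is genuinely required, the argument must exploit the $1/d_i$ row normalization directly (e.g.\ bounding $\sum_i\Vert (D^{-1}A)_{i\cdot}-\mathbb{E}(D^{-1}A)_{i\cdot}\Vert^2$ row by row), not a low-rank structure of the noise, and your current sketch does not show that this yields the claimed $(np)^{-1/2}$ scaling.
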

\begin{theorem} 
	Given an CSBM $\mathcal G(n, p_1, p_2, q, \mu_1, \mu_2, \sigma^2)$, it holds that for the variance of class I with intra-class probability $p_1$ is smaller than the the variance of class II with intra-class probability $p_2$.
\end{theorem}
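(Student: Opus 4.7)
My plan is to show that the variance of the class-$k$ embeddings scales like $1/(np_k)$ by combining the Johnson-Lindenstrauss lemma (Lemma~\ref{lemma:jl}) with the sharp concentration bound already recorded, and then conclude immediately from $p_1 > p_2$. The point is that the sharp concentration lemma, applied to the intra-class Erd\H{o}s-R\'enyi block of $A$, delivers the precise rate $1/\sqrt{np_k}$ on the adjacency fluctuations, and Lemma~\ref{lemma:jl} lets me transfer this rate from $\Vert A-\mathbb E A\Vert_F$ to $\Vert AX-\mathbb E AX\Vert_F$ at the cost of only a constant factor.

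Concretely, I would first restrict to each class separately: for $i$ in class $k$, write $(D^{-1}AX)_i = \tfrac{1}{d_i}\sum_j A_{ij}X_j$ and use the assumption $p_k>q$ to argue that $d_i$ concentrates around $np_k/2$ and that the intra-class contribution dominates the row sum, so up to lower-order corrections the embeddings of class $k$ behave like $(D^{(k)})^{-1} A^{(k)} X^{(k)}$, where $A^{(k)}$ is the class-$k$ intra-class Erd\H{o}s-R\'enyi block and $X^{(k)}$ its feature submatrix. Next, I would split $X^{(k)} = \mathbf{1}\mu_k^T + \tilde X^{(k)}$ with $\tilde X^{(k)}$ centered Gaussian, and correspondingly decompose the centered embedding as
\begin{equation}
  \mathbb E \bigl\Vert M^{(k)} \mathbf{1} \mu_k^T + (D^{(k)})^{-1} A^{(k)} \tilde X^{(k)} \bigr\Vert_F^2 + (\text{l.o.t.}),
\end{equation}
where $M^{(k)} := (D^{(k)})^{-1} A^{(k)} - \mathbb E[(D^{(k)})^{-1} A^{(k)}]$. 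The first term is controlled directly by the sharp concentration lemma, giving a contribution of order $\Vert \mu_k\Vert^2/(np_k)$. The second is handled by Lemma~\ref{lemma:jl}: since $\tilde X^{(k)}$ is a mean-zero Gaussian matrix with $d \gtrsim \epsilon^{-2}\log n$ columns, multiplying by $\tilde X^{(k)}$ preserves Frobenius norm up to a $(1\pm\epsilon)$ factor and a $\sigma^2$ scaling, reducing the term to $\sigma^2 \Vert M^{(k)}\Vert_F^2 \lesssim \sigma^2/(np_k)$. Together, the class-$k$ variance is of order $1/(np_k)$ up to constants depending only on $\mu_k$ and $\sigma$, so $p_1>p_2$ yields the claim.

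The hardest part will be making the reduction to the intra-class subgraph rigorous. The inter-class block of $A$ contributes simultaneously to both the numerator and the denominator of the embedding, and because $\mu_1 \ne \mu_2$ these cross terms carry a systematic signal that does not vanish in expectation. Showing that the resulting corrections are genuinely of smaller order than $1/(np_k)$ likely requires an extra hypothesis along the lines of $q = o(p_k)$ in addition to the stated $p_k > q$, together with a careful comparison between the Frobenius norms of the off-diagonal and diagonal blocks. A secondary technical nuisance is the dependence between $A^{(k)}$ and $D^{(k)}$, which I would manage by a Chernoff bound that replaces $D^{(k)}$ with its deterministic approximation $\tfrac{2}{np_k} I$; here one must propagate the substitution through Frobenius norms rather than operator norms, accruing only lower-order errors on the high-probability event.
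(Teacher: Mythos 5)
Your proposal follows essentially the same route as the paper's proof: reduce the feature-level variance to the structural fluctuation $\Vert D^{-1}A-\mathbb{E}(D^{-1}A)\Vert_F^2$ via the Johnson--Lindenstrauss lemma, restrict to the intra-class Erd\H{o}s--R\'enyi blocks, invoke the sharp concentration bound to get the rate $1/(np_k)$ for each class, and conclude from $p_1>p_2$. The paper's argument is in fact terser than yours---it simply block-decomposes $A$, applies the two lemmas, and compares $(1+\epsilon_1)C/(np_1)$ with $(1-\epsilon_2)C/(np_2)$---so the cross-block, degree-coupling, and lower-bound issues you flag are glossed over there rather than resolved.
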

\begin{proof}
According to Lemma \ref{lemma:jl}, to measure the variance of node embedding, we just need to consider structure influence $\left \Vert D^{-1}A - \mathbb{E}(D^{-1}A) \right \Vert_F^2 $.
	By expressing $A=\begin{bmatrix}
		A_1 & B\\
		B & A_2
	\end{bmatrix}$, this allows us to focus on ER graph $\mathcal G(\frac{n}{2}, p_i)$ for each class separately. Denote $R_1=D^{-1}A_1-\mathbb E (D^{-1}A_1)$ and $R_2=D^{-1}A_2-\mathbb E (D^{-1}A_2)$.
We have
 \begin{equation}
     \Vert R_1 \Vert_F^2  \le (1+\epsilon_1) \frac{C}{{np_1}} \le (1-\epsilon_2) \frac{C}{{n p_2}} \le \Vert R_2 \Vert_F^2
 \end{equation}
 for appropriate $\epsilon_1, \epsilon_2$.
 It follows that variance of node embedding decreases more rapidly for the denser class. Furthermore, the convergence speed is inversely proportional to the intra-probability $p_i$, which verifies our observation about community bias amplification.
\end{proof}

\section{The proof of Lemma \ref{lem:weightedReg}}\label{appendix:a}
\begin{lemma}
Let $C_1$ and $C_2$ be the two classes of nodes. Suppose each cluster has the same size $s$, then
\begin{align*}
    &\mathbb{E}_{P\sim \mathcal{P}}\sum_{S_i\in S}\sum_{u\in S_i}s\Vert f(u)-f(S_i)\Vert^2 \\
    =&q_1\sum_{u,v\in C_1,u\neq v}\Vert f(u)-f(v)\Vert^2+q_2\sum_{u,v\in C_2,u\neq v}\Vert f(u)-f(v)\Vert^2 + q_{12}\sum_{u\in C_1,v\in C_2}\Vert f(u)-f(v)\Vert^2.
\end{align*}
\end{lemma}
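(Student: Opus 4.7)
The plan is to reduce the left-hand side to a weighted sum of pairwise squared distances $\Vert f(u)-f(v)\Vert^2$ via the classical variance-as-pairwise-distance identity, then apply linearity of expectation and partition the resulting sum by the class membership of the two endpoints. First, I would invoke the elementary identity that for any finite set $T$ with centroid $\bar f(T) = \frac{1}{|T|}\sum_{v\in T} f(v)$,
\[
\sum_{u \in T} \Vert f(u) - \bar f(T)\Vert^2 \;=\; \frac{1}{|T|} \sum_{\{u,v\} \subset T,\, u \neq v} \Vert f(u) - f(v)\Vert^2,
\]
which follows by expanding $\Vert f(u)-\bar f(T)\Vert^2$ and using $\sum_{u\in T}\langle f(u),\bar f(T)\rangle = |T|\Vert \bar f(T)\Vert^2$. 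Specializing to each cluster $S_i$ of size $s$ and multiplying through by $s$ yields
\[
s\sum_{u\in S_i} \Vert f(u)-f(S_i)\Vert^2 \;=\; \sum_{\{u,v\} \subset S_i,\, u \neq v} \Vert f(u)-f(v)\Vert^2,
\]
so the prefactor $s$ on the LHS of the lemma is precisely what cancels the $1/|S_i|$ in the identity and leaves a clean pairwise sum.

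Next, summing the cluster-level identity over all $S_i$ in a fixed partition $P$ collapses the double sum on the left into a single sum over all pairs $\{u,v\}$ that happen to be co-clustered under $P$. Taking expectation over $P \sim \mathcal{P}$ and invoking linearity of expectation replaces the co-clustered indicator by the probability that $u$ and $v$ lie in the same cluster:
\[
\mathbb{E}_{P\sim\mathcal{P}}\sum_{S_i\in S}\sum_{u\in S_i} s\Vert f(u)-f(S_i)\Vert^2 \;=\; \sum_{\{u,v\},\, u\neq v} \mathbb{P}(u,v\text{ co-clustered})\, \Vert f(u)-f(v)\Vert^2.
\]
I would then split the outer sum into three cases according to the class membership of the endpoints: both in $C_1$ (probability $q_1$), both in $C_2$ (probability $q_2$), or one endpoint in each class (probability $q_{12}$). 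Because these probabilities depend only on class membership, they factor out of their respective sub-sums, producing exactly the three terms on the right-hand side of the lemma.

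The main obstacle is purely bookkeeping: one must track the ordered-versus-unordered pair convention, and handle the diagonal contributions $u=v$ carefully, so that no stray factor of $\tfrac{1}{2}$ appears in the final expression. Beyond that, no additional ideas are required once the variance identity and linearity of expectation are in place.
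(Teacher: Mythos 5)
Your proposal is correct and follows essentially the same route as the paper's proof: the cluster-level variance-to-pairwise-distance identity (which the paper derives by shifting so that $f(S_i)=0$), followed by linearity of expectation turning the co-clustering indicator into $q_1$, $q_2$, or $q_{12}$ according to class membership. Your explicit attention to the ordered-versus-unordered pair convention is exactly the bookkeeping the paper handles with its factor of $\tfrac{1}{2}$, so nothing is missing.
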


\begin{proof}
Let $I_u$ be the index such that $u\in S_{I_u}$. We have    
\begin{align}
    \mathbb{E}_{P\sim \mathcal{P}}\sum_{S_i\in S}\sum_{u\in S_i}s\Vert f(u)-f(S_i)\Vert^2 =&\mathbb{E}_{P\sim \mathcal{P}}\sum_{S_i\in S}\frac{1}{2}\sum_{u\in S_i}\sum_{v\in S_i}\Vert f(u)-f(v)\Vert^2 \label{eqn:pointwise-diff}
\end{align}
For fixed $S_i\in S$, without loss of generality we assume $f(S_i)=0$ (if not, redefine $f(u)=f(u)-f(S_i)$), then we have
\begin{align}
\frac{1}{2}\sum_{u\in S_i}\sum_{v\in S_i}\Vert f(u)-f(v)\Vert^2&=\frac{1}{2}\sum_{u\in S_i}\sum_{v\in S_i}(\Vert f(u)\Vert^2 + \Vert f(v)\Vert^2-2 f(u)^Tf(v))\\
&=\sum_{u\in S_i}s\Vert f(u)-f(S_i)\Vert^2.\label{eqn:indicator2}
\end{align}

The equation \ref{eqn:indicator2} is due to the assumption that $f(S_i)=0$. Therefore, 

\begin{align}
    \mathbb{E}_{P\sim \mathcal{P}}\sum_{S_i\in S}\sum_{u\in S_i}s\Vert f(u)-f(S_i)\Vert^2
    =&\mathbb{E}_{P\sim \mathcal{P}}\sum_{u\neq v }\mathbb{I}_{[I_u=I_v]}\Vert f(u)-f(v)\Vert^2.\label{eqn:indicator}
\end{align}

We next divide pairs in (\ref{eqn:indicator}) into three categories and get
\begin{align*}
    &\mathbb{E}_{P\sim \mathcal{P}}\sum_{u\neq v }\mathbb{I}_{[I_u=I_v]}\Vert f(u)-f(v)\Vert^2\\
    =& \mathbb{E}_{P\sim \mathcal{P}}\sum_{u,v\in C_1, u\neq v}\mathbb{I}_{[I_u=I_v]}\Vert f(u)-f(v)\Vert^2+\mathbb{E}_{P\sim \mathcal{P}}\sum_{u,v\in C_2, u\neq v}\mathbb{I}_{[I_u=I_v]}\Vert f(u)-f(v)\Vert^2\\
    &+\mathbb{E}_{P\sim \mathcal{P}}\sum_{u\in C_1,v\in C_2}\mathbb{I}_{[I_u=I_v]}\Vert f(u)-f(v)\Vert^2\\
    =& \sum_{u,v\in C_1,u\neq v}q_1\Vert f(u)-f(v)\Vert^2+\sum_{u,v\in C_2,u\neq v}q_2\Vert f(u)-f(v)\Vert^2+\sum_{u\in C_1,v\in C_2}q_{12}\Vert f(u)-f(v)\Vert^2,
\end{align*}
which finishes the proof. 
\end{proof}

\section{Random Graph Coarsening Algorithm}\label{sec:algorithm}
 
Algorithm \ref{alg:GC} is a detailed description of our random graph coarsening algorithm.
\begin{algorithm}[h]
	\caption{Random Graph Coarsening}\label{alg:GC}
	\begin{algorithmic}[1]
		\Require 
		$G=(A, X)$, threshold $\delta$, the coarsening ratio $r$
		\Ensure 
		Output $G'=(A', X')$
		\State Compute the weight set ${\mathcal{I}}$ of all edges
        \State Construct an edge set ${\mathcal{E}}$ of length $r n$ by randomly selecting the edges according to the weight set ${\mathcal{I}}$. 
		\State Initialize the cluster list $\mathcal{T}$
		\For{$i=0$ to $r n$}
		\State Obtain $(u, v)={\mathcal{E}}_i$
		\State Retrieve the clusters $\mathcal{T}_u$ and $\mathcal{T}_v$ from $\mathcal{T}$, where $\mathcal{T}_u$ contains $u$ and $\mathcal{T}_v$ contains $v$
        \If{$|\mathcal{T}_u|+|\mathcal{T}_v|<\delta$ and $\mathcal{T}_u \neq \mathcal{T}_v$}
		\State Merge cluster ${\mathcal{T}_u}$ and cluster ${\mathcal{T}_v}$ to a new cluster
		\EndIf
		\EndFor
		\State Construct the assignment matrix $P$ by $\mathcal{T}$
        \State Compute the coarsened adjacency matrix $A'= P^T A P$
        \State Compute the coarsened feature matrix $X'=\widetilde{D}^{'-1}P^{T}\widetilde{D}X$\\
		\Return $G'=(A', X')$
	\end{algorithmic}
\end{algorithm}

\section{Comparison of Node Embeddings of the Original Graph and the  Coarsened Graph}\label{sec:error_bound}
In this section, we assume the graph is connected. When there are multiple connected components, each component can be analyzed separately, and thus the conclusion holds for general graphs.

The feature matrix of the coarsened graph is computed using the formula in line 13 of Algorithm~\ref{alg:GC}, which is different from prior work. Here we provide a justification on this. We consider a GNN encoder $Z = \sigma(\hat{A}^kXW)$ with $\hat{A} = \widetilde{D}^{-1}\tilde{A}$. 
Assume the corresponding supernode of $u$ in the coarsened graph is $v$. We use $Z_u$ and $Z'_v$ to represent the node embeddings learned from the original graph $G$ and the coarsened graph $G'$ respectively. We show next that using our coarsened feature matrix, the difference between $Z_u$ and $Z'_v$ converges to zero as $k\rightarrow \infty$.

We assume the activation function $\sigma(\cdot)$ and the linear transformation function $W$ to be Lipschitz continuous. These assumptions are commonly used in previous analyses of GNNs \citep{chen2018stochastic, vikas2020limit, cong2020minimal, cong2021sample}. Then, the coarsening error can be expressed as:
\begin{equation}\label{approximate}
	\|Z_u-Z'_v\|  = \| \sigma (\hat{A}^kXW)_u - \sigma (\hat{A'}^kX'W)_v \| \leq \kappa \|  (\hat{A}^kX)_u  -  (\hat{A'}^kX')_v \|,
\end{equation}
where $\kappa$ represents the Lipschitz constant. For notational convenience, let $\pi^{(k)}_u = (\hat{A}^kX)_u $ and $\pi'^{(k)}_v =  (\hat{A'}^kX')_v$. We need the following Lemma, the proof of which can be found in \cite{chung1997spectral}.
\begin{lemma} \label{lemma_b}
	\begin{equation}
		\hat{A}^{\infty}_{i,j}=\frac{\widetilde{d_j}}{\sum_{u\in G}\widetilde{d_u}}=\frac{\widetilde{d_j}}{2m+n}, \qquad 	|\hat{A}^{k}_{i,j}-\hat{A}^{\infty}_{i,j}| \leq \lambda^{k}_{2} \widetilde{d_i}^{-\frac{1}{2}}\widetilde{d_j}^{\frac{1}{2}},
	\end{equation}
	where $\lambda_{2}$ is the second largest eigenvalue of $\hat{A}$ and $\widetilde{d_i}$ denotes the degree of node $i$ with self-loop.
\end{lemma}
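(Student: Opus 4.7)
The plan is to diagonalize $\hat{A} = \widetilde{D}^{-1}\widetilde{A}$ via a similarity transformation to a symmetric matrix and then read off the spectral expansion term by term. Concretely, I would first observe that $\hat{A} = \widetilde{D}^{-1/2} M \widetilde{D}^{1/2}$, where $M := \widetilde{D}^{-1/2}\widetilde{A}\widetilde{D}^{-1/2}$ is the symmetric normalized adjacency with self-loops, so $\hat{A}^k = \widetilde{D}^{-1/2} M^k \widetilde{D}^{1/2}$ and the entire problem reduces to analysing the spectrum of $M$.

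Next I would invoke Perron-Frobenius: since $G$ is connected and the self-loops in $\widetilde{A}$ ensure aperiodicity, $M$ has a simple top eigenvalue $\lambda_1 = 1$ with unit eigenvector $\phi_1 = \widetilde{D}^{1/2}\mathbf{1}/\sqrt{2m+n}$, while all remaining eigenvalues $\lambda_l$ satisfy $|\lambda_l| \le \lambda_2 < 1$. Writing the spectral decomposition in an orthonormal eigenbasis $\{\phi_l\}$ as $M^k = \phi_1\phi_1^T + \sum_{l \ge 2}\lambda_l^k \phi_l\phi_l^T$ and substituting back, I get
\[
\hat{A}^k_{i,j} \;=\; \widetilde{d_i}^{-1/2}\widetilde{d_j}^{1/2} M^k_{i,j} \;=\; \frac{\widetilde{d_j}}{2m+n} \;+\; \widetilde{d_i}^{-1/2}\widetilde{d_j}^{1/2}\sum_{l \ge 2}\lambda_l^k \phi_l(i)\phi_l(j).
\]
Taking $k \to \infty$ kills the residual and identifies $\hat{A}^\infty_{i,j} = \widetilde{d_j}/(2m+n)$, proving the first equality.

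For the tail bound, I would factor $\lambda_2^k$ out of the residual sum and control what remains by Cauchy-Schwarz: $\bigl|\sum_{l \ge 2}\phi_l(i)\phi_l(j)\bigr| \le \sqrt{\sum_l \phi_l(i)^2}\,\sqrt{\sum_l \phi_l(j)^2} = 1$, where the last equality uses orthonormality of the eigenbasis $\{\phi_l\}$ (each standard basis vector has unit norm in the $\phi$-expansion). This yields the desired $|\hat{A}^k_{i,j} - \hat{A}^\infty_{i,j}| \le \lambda_2^k \widetilde{d_i}^{-1/2}\widetilde{d_j}^{1/2}$. The main (rather mild) obstacle is verifying that $\lambda_2$ truly controls $|\lambda_l|$ for every $l \ge 2$ rather than merely the second largest signed eigenvalue; this is precisely where the self-loops in $\widetilde{A}$ earn their keep, since they push the smallest eigenvalue of $M$ strictly above $-1$, ensuring the second-largest-in-magnitude eigenvalue is indeed bounded by $\lambda_2$.
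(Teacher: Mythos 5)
Your argument is the standard proof behind the paper's citation (the paper itself gives no proof of this lemma, deferring to Chung), and most of it is sound: conjugating $\hat{A}=\widetilde{D}^{-1}\widetilde{A}$ to the symmetric matrix $M=\widetilde{D}^{-1/2}\widetilde{A}\widetilde{D}^{-1/2}$, expanding $M^k$ in an orthonormal eigenbasis, identifying the top eigenprojection via $\phi_1=\widetilde{D}^{1/2}\mathbf{1}/\sqrt{2m+n}$ (which yields $\hat{A}^{\infty}_{i,j}=\widetilde{d_j}/(2m+n)$), and controlling the residual with Cauchy--Schwarz and completeness of the eigenbasis are exactly the right steps, carried out correctly.

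The gap is in your closing claim. Cauchy--Schwarz gives $|\hat{A}^{k}_{i,j}-\hat{A}^{\infty}_{i,j}|\le \bar{\lambda}^{k}\,\widetilde{d_i}^{-1/2}\widetilde{d_j}^{1/2}$ with $\bar{\lambda}:=\max_{l\ge 2}|\lambda_l|$, and you assert that the self-loops force $\bar{\lambda}\le\lambda_2$, the second largest \emph{signed} eigenvalue. That is not what self-loops buy you: they only guarantee $\lambda_{\min}(M)>-1$ (no bipartite component in the augmented graph), not $|\lambda_{\min}|\le\lambda_2$. Concretely, for $K_{n,n}$ with self-loops all degrees equal $n+1$ and the spectrum of $M$ is $\{1,\ 1/(n+1)\ (\text{multiplicity }2n-2),\ (1-n)/(n+1)\}$, so $\lambda_2=1/(n+1)$ while $|\lambda_{\min}|=(n-1)/(n+1)$; for $i,j$ in different parts one computes $|\hat{A}^{k}_{i,j}-\hat{A}^{\infty}_{i,j}|=\frac{1}{2n}\bigl(\frac{n-1}{n+1}\bigr)^{k}$, which exceeds $\lambda_2^{k}\widetilde{d_i}^{-1/2}\widetilde{d_j}^{1/2}=(n+1)^{-k}$ already for moderate $k$ (e.g.\ $n=3$, $k=4$). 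So the inequality with the signed $\lambda_2$ is false in general, and no argument can rescue that step. The fix is simply to state the bound with $\bar{\lambda}$, equivalently to read ``$\lambda_2$'' in the lemma as the second largest eigenvalue \emph{in magnitude} of $\hat{A}$ (which is how the cited source, and the quantity $\lambda$ in the Oono--Suzuki bound used in the main text, are defined); with that reading your proof is complete.
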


\begin{theorem}\label{theorem_error}
	Let the coarsened feature $X'=\widetilde{D}^{'-1}P^{T}\widetilde{D}X$, then for any node $u$, we have
	\begin{equation}
		\|\pi^{(k)}_u-{\pi'_v}^{(k)}\|\leq  \sqrt{\frac{d_{max}}{d_{min}}} (n\lambda^{k}_{2} +n'\lambda^{'k}_{2}),
	\end{equation}
	where $ \lambda_{2}$ and $ \lambda'_{2}$ are the second largest eigenvalues in $G$ and $G'$ respectively. $d_{max}$ and $d_{min}$ are the maximum degree and minimum degree in $G$ and $G'$, respectively.
\end{theorem}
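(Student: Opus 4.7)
The plan is to introduce the stationary ``$k=\infty$'' embedding on each graph as a common reference point and split the coarsening error by the triangle inequality:
\begin{equation*}
\|\pi^{(k)}_u-{\pi'_v}^{(k)}\|\le \|\pi^{(k)}_u-\pi^{(\infty)}_u\|+\|\pi^{(\infty)}_u-{\pi'_v}^{(\infty)}\|+\|{\pi'_v}^{(\infty)}-{\pi'_v}^{(k)}\|.
\end{equation*}
The first and third terms are transient random-walk errors to which Lemma~\ref{lemma_b} applies directly, while the middle term measures the information loss that coarsening leaves behind even after infinitely many propagations. The whole point of the non-standard normalization $X'=\widetilde{D}'^{-1}P^{T}\widetilde{D}X$ is to force this middle term to vanish, and establishing this will be the crux of the argument.

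For the coincidence at infinity, Lemma~\ref{lemma_b} tells us that $\pi^{(\infty)}_u=(2m+n)^{-1}\sum_j \widetilde{d}_j X_j$ is independent of $u$, and analogously ${\pi'_v}^{(\infty)}$ is a degree-weighted average of the rows of $X'$ that does not depend on $v$. Substituting $X'_{w'}=\widetilde{d}_{w'}^{\prime-1}\sum_{j\in S_{w'}} \widetilde{d}_j X_j$ into that average and interchanging the order of summation gives $\sum_{w'}\widetilde{d}'_{w'} X'_{w'}=\sum_j \widetilde{d}_j X_j$, so the two numerators already coincide. Together with the total-degree preservation $\sum_{w'}\widetilde{d}'_{w'}=\sum_j \widetilde{d}_j$ forced by $A'=P^{T}AP$ and the matching self-loop convention used to build $\widetilde{D}'$, this yields $\pi^{(\infty)}_u={\pi'_v}^{(\infty)}$ whenever $v=I_u$, killing the middle term.

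For the first tail, Lemma~\ref{lemma_b} gives $|\hat A^{k}_{u,j}-\hat A^{\infty}_{u,j}|\le\lambda_{2}^{k}\,\widetilde{d}_u^{-1/2}\widetilde{d}_j^{1/2}\le\lambda_{2}^{k}\sqrt{d_{max}/d_{min}}$, so under the standard assumption $\|X_j\|\le 1$,
\begin{equation*}
\|\pi^{(k)}_u-\pi^{(\infty)}_u\|\le\sum_{j}|\hat A^{k}_{u,j}-\hat A^{\infty}_{u,j}|\,\|X_j\|\le n\lambda_{2}^{k}\sqrt{d_{max}/d_{min}}.
\end{equation*}
The identical argument on $G'$ for the third term produces $n'\lambda_{2}^{\prime k}\sqrt{d_{max}/d_{min}}$, interpreting $d_{max}$ and $d_{min}$ as the global maximum and minimum degrees over both $G$ and $G'$. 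Adding the two tails and invoking the vanishing middle term yields the claimed bound.

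The main obstacle is the middle term: one has to verify that the precise pair $(X',\widetilde{D}')$ chosen in Algorithm~\ref{alg:GC} really does make the stationary embedding node-for-supernode identical, which hinges on keeping the degree bookkeeping under $A'=P^{T}AP$ consistent with the self-loop convention built into $\widetilde{D}'$. Once that identity is in hand, everything else is a routine application of Lemma~\ref{lemma_b} on the two graphs combined via the triangle inequality.
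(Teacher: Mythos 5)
Your proof is correct and reaches exactly the stated bound, but it is organized differently from the paper's. The paper does not pass through one global triangle inequality pivoted at the stationary embedding; it splits the sum defining $\pi^{(k)}_u-{\pi'_v}^{(k)}$ into the supernode block $S$ and the untouched nodes $V\setminus Q$, inserts the $k=\infty$ terms inside each block, and cancels them termwise: the choice $X'_{S_i}=\widetilde{d}^{\prime -1}_{S_i}\sum_{j\in S_i}\widetilde{d}_j X_j$ forces $\hat{A'}^{\infty}_{v,S_i}X'_{S_i}=\sum_{j\in S_i}\hat{A}^{\infty}_{u,j}X_j$ for each supernode, while $\hat{A}^{\infty}_{u,j}=\hat{A'}^{\infty}_{v,j}$ and $X_j=X'_j$ for uncoarsened $j$; Lemma~\ref{lemma_b} then bounds the residuals, and counting $|S|$, $|Q|$, and $(n-|Q|)$ contributions gives $n\lambda^{k}_{2}+n'\lambda^{\prime k}_{2}$ with $n'=n+|S|-|Q|$. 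Your route aggregates those termwise cancellations into the single identity $\pi^{(\infty)}_u={\pi'_v}^{(\infty)}$ (both equal $(2m+n)^{-1}\sum_j\widetilde{d}_jX_j$, using $\widetilde{A}'=P^T\widetilde{A}P$ so the total weighted degree is preserved and $\widetilde{d}'_{S_i}=\sum_{j\in S_i}\widetilde{d}_j$), and then bounds the two transient tails by Lemma~\ref{lemma_b} on $G$ and $G'$ separately, which yields the count $n\lambda^{k}_{2}+n'\lambda^{\prime k}_{2}$ immediately; this is arguably the cleaner bookkeeping, and it makes transparent that the non-standard feature normalization exists precisely to match the two stationary limits. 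Two points you leave implicit but should state: the tail on $G'$ needs $\Vert X'_{w'}\Vert\le 1$, which follows (as the paper notes) from $X'_{w'}$ being a degree-weighted average of the normalized features; and the identity at infinity relies on the paper's self-loop convention $\widetilde{A}'=P^T\widetilde{A}P$ (supernode degrees are sums of constituent $\widetilde{d}_j$), not on adding a fresh identity to $A'=P^TAP$ --- you flag this correctly, and it is exactly the convention the paper's proof also depends on.
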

\begin{proof}
Given the coarsened adjacency matrix $\widetilde{A}'=P^T\widetilde{A}P$, the sum of the weighted edges in $\widetilde{A}'$ is still $2m+n$. If node $j$ in $G'$ is not a supernode, we have $\hat{A'}^{\infty}_{i,j}=\frac{\widetilde{d'_j}}{\sum_{v\in G'}\widetilde{d'_v}}=\frac{\widetilde{d_j}}{2m+n}=\hat{A}^{\infty}_{i,j}$. Let $S$ be the set of supernodes in $G'$, i.e., those nodes in $G'$ containing at least two nodes from the original graph, and define $Q$ as the set of nodes participating in the coarsening process: $Q = \bigcup\limits_{S_i \in S} S_i$. Then we have:
	\begin{equation}
		\small
		\begin{aligned}
			&\|\pi^{(k)}_u-{\pi'_v}^{(k)}\| \\
			=&\| \sum_{S_i\in S} (\hat{A'}^{k}_{v,S_i} \cdot X'_{S_i}-  \sum_{j \in S_i}\hat{A}^{k}_{u,j} \cdot X_{j}) + \sum_{j\in{V\backslash Q} }( \hat{A}^{k}_{u,j} \cdot X_j- \hat{A'}^{k}_{v,j} \cdot X'_j ) \|\\			
			\leq&\| \sum_{S_i\in S} (\hat{A'}^{k}_{v,S_i} \cdot X'_{S_i}-  \sum_{j \in S_i}\hat{A}^{k}_{u,j} \cdot X_{j}) )\| + \| \sum_{j\in{V\backslash Q} }( \hat{A}^{k}_{u,j} \cdot X_j- \hat{A'}^{k}_{v,j} \cdot X'_j ) \|.	\end{aligned}
	\end{equation} 

First, 
	\begin{equation}
		\small
		\begin{aligned}
		&\| \sum_{S_i\in S} (\hat{A'}^{k}_{v,S_i} \cdot X'_{S_i}-  \sum_{j \in S_i}\hat{A}^{k}_{u,j} \cdot X_{j})\| \\
= &\| \sum_{S_i\in S} (\hat{A'}^{k}_{v,S_i} \cdot X'_{S_i} - \hat{A'}^{\infty}_{v,S_i} \cdot X'_{S_i} + \hat{A'}^{\infty}_{v,S_i} \cdot X'_{S_i} 
- \sum_{j \in S_i}(\hat{A}^{k}_{u,j} \cdot X_{j} -\hat{A}^{\infty}_{u,j} \cdot X_{j}+ \hat{A}^{\infty}_{u,j} \cdot X_{j}))\|\\
= &\| \sum_{S_i\in S} (\hat{A'}^{k}_{v,S_i} \cdot X'_{S_i} - \hat{A'}^{\infty}_{v,S_i} \cdot X'_{S_i}
- \sum_{j \in S_i}(\hat{A}^{k}_{u,j} \cdot X_{j} -\hat{A}^{\infty}_{u,j} \cdot X_{j})
+ \hat{A'}^{\infty}_{v,S_i} \cdot X'_{S_i} -
\sum_{j \in S_i}\hat{A}^{\infty}_{u,j} \cdot X_{j})\|\\
\leq &\| \sum_{S_i\in S} (\hat{A'}^{k}_{v,S_i} \cdot X'_{S_i} - \hat{A'}^{\infty}_{v,S_i} \cdot X'_{S_i} )\| + \|
\sum_{S_i\in S} \sum_{j \in S_i}(\hat{A}^{k}_{u,j} \cdot X_{j} -\hat{A}^{\infty}_{u,j} \cdot X_{j})\| 
\\
+ & \|\sum_{S_i\in S}(\hat{A'}^{\infty}_{v,S_i} \cdot X'_{S_i} -
\sum_{j \in S_i}\hat{A}^{\infty}_{u,j} \cdot X_{j})\|\\
\leq &\sum_{S_i\in S} \| \hat{A'}^{k}_{v,S_i} - \hat{A'}^{\infty}_{v,S_i}\|\|X'_{S_i}\| + \sum_{S_i\in S} \sum_{j \in S_i} \|
\hat{A}^{k}_{u,j} -\hat{A}^{\infty}_{u,j}\|\|X_{j}\|
+ \|\sum_{S_i\in S}(\hat{A'}^{\infty}_{v,S_i} \cdot X'_{S_i} -
\sum_{j \in S_i}\hat{A}^{\infty}_{u,j} \cdot X_{j})\|.
		\end{aligned}
	\end{equation}

 For the sake of simplicity, we assume the feature $X_j$ is non-negative and normalize the $X_j$ so that $\|X_j\|=1$.
Let the supernode feature $X'_{S_i}=\frac{\sum_{j \in S_i}\hat{A}^{\infty}_{u,j}X_{j}}{\hat{A'}^{\infty}_{v,S_i}}=\frac{ \sum_{j \in S_i}\widetilde{d}_{j}X_{j}}{\sum_{j \in S_i}\widetilde{d}_{j}}$. In other words, the coarsened feature matrix $X'$ is defined as $\widetilde{D}^{'-1}P^{T}\widetilde{D}X$, which implies $\|X'_j\|\leq 1$. Then, we have
\begin{equation}
\small
\begin{aligned}
&\| \sum_{S_i\in S} (\hat{A'}^{k}_{v,S_i} \cdot X'_{S_i}-  \sum_{j \in S_i}\hat{A}^{k}_{u,j} \cdot X_{j})\| \\
 \leq &\sum_{S_i\in S} \| \hat{A'}^{k}_{v,S_i} - \hat{A'}^{\infty}_{v,S_i}\|\|X'_{S_i}\| + \sum_{S_i\in S} \sum_{j \in S_i} \|
\hat{A}^{k}_{u,j} -\hat{A}^{\infty}_{u,j}\|\|X_{j}\|
+ \|\sum_{S_i\in S}(\hat{A'}^{\infty}_{v,S_i} \cdot X'_{S_i} -
\sum_{j \in S_i}\hat{A}^{\infty}_{u,j} \cdot X_{j})\|\\
\leq & \sum_{S_i\in S} \| \hat{A'}^{k}_{v,S_i} - \hat{A'}^{\infty}_{v,S_i}\| + \sum_{S_i\in S} \sum_{j \in S_i} \|
\hat{A}^{k}_{u,j} -\hat{A}^{\infty}_{u,j}\|\\
\leq& \sum_{S_i\in S} \lambda^{'k}_{2}\widetilde{d'_u}^{-\frac{1}{2}}\widetilde{d'_{S_i}}^{\frac{1}{2}} + \sum_{S_i\in S} \sum_{j \in S_i}\lambda^{k}_{2}\widetilde{d_u}^{-\frac{1}{2}}\widetilde{d_j}^{\frac{1}{2}}. ~~~~~ \text{(by Lemma \ref{lemma_b})}
		\end{aligned}
	\end{equation}
 
	For each uncoarsened node $u\in{V\backslash Q}$, we have $	\hat{A}^{\infty}_{u,j} = \hat{A'}^{\infty}_{v,j} $ and $X_j=X'_j$. Therefore,
	
	\begin{equation}
		\small
		\begin{aligned}
				&\| \sum_{j\in{V\backslash Q} }( \hat{A}^{k}_{u,j} \cdot X_j- \hat{A'}^{k}_{v,j} \cdot X'_j ) \|\\
			=&\| \sum_{j\in{V\backslash Q} }( \hat{A}^{k}_{u,j} \cdot X_j - \hat{A}^{\infty}_{u,j} \cdot X_j + \hat{A}^{\infty}_{u,j} \cdot X_j - \hat{A'}^{k}_{v,j} \cdot X'_j ) \|\\
			\leq  & \sum_{j\in{V\backslash Q} }(\| \hat{A}^{k}_{u,j} \cdot X_j - \hat{A}^{\infty}_{u,j} \cdot X_j \|+ \|\hat{A}^{\infty}_{u,j} \cdot X_j - \hat{A'}^{k}_{v,j} \cdot X'_j  \|)\\
			\leq & \sum_{j\in{V\backslash Q} }(\| \hat{A}^{k}_{u,j}- \hat{A}^{\infty}_{u,j} \|+ \|\hat{A'}^{\infty}_{v,j} - \hat{A'}^{k}_{v,j}\|)\\
			\leq & \sum_{j\in{V\backslash Q}}(\lambda^{k}_{2} \widetilde{d_u}^{-\frac{1}{2}}\widetilde{d_j}^{\frac{1}{2}} +\lambda^{'k}_{2} \widetilde{d'_{v}}^{-\frac{1}{2}}\widetilde{d'_j}^{\frac{1}{2}}). ~~~~~ \text{(by Lemma \ref{lemma_b})}
		\end{aligned}
	\end{equation}
	We define $d_{max}$ and $d_{min}$ as the maximum degree and minimum degree in $G$ and $G'$, respectively. Thus, we have the following conclusion
	\begin{equation}
		\small
		\begin{aligned}
		&\|\pi^{(k)}_u-{\pi^\prime_v}^{(k)}\| \\ 
			\leq& \sqrt{\frac{d_{max}}{d_{min}}}|S| \lambda^{'k}_{2} + \sqrt{\frac{d_{max}}{d_{min}}}|Q| \lambda^{k}_{2} + \sqrt{\frac{d_{max}}{d_{min}}}(n-|Q|)(\lambda^{k}_{2} +\lambda^{'k}_{2})\\
   			=&\sqrt{\frac{d_{max}}{d_{min}}} (n\lambda^{k}_{2} +(n+|S|-|Q|)\lambda^{'k}_{2})\\
      =&\sqrt{\frac{d_{max}}{d_{min}}} (n\lambda^{k}_{2} +n'\lambda^{'k}_{2}).
		\end{aligned}
	\end{equation} 
\end{proof}
According to Theorem \ref{theorem_error} and  inequality (\ref{approximate}), we have the following upper bound:
\begin{equation}
	\|Z_u-Z'_v\|\leq  \kappa \sqrt{\frac{d_{max}}{d_{min}}} (n\lambda^{k}_{2} +n'\lambda^{'k}_{2}).
\end{equation}

On the account of $0 < \lambda_{2} < 1$ and $0  < \lambda'_{2} < 1$, when $k \rightarrow \infty$, the error $\|Z_u-Z'_v\|\rightarrow 0$.

\section{Generalizability of RGCCL}\label{appendix:d}

Following \citep{huang2021towards, wang2022grade}, we investigate the generalizability of our self-supervised model. Denote $\mathcal{G}_{v_i}$ as the ego network of node $v_i$ and the corresponding adjacency matrix as $\hat{A}_{\mathcal{G}_{v_i}}$, \cite{wang2022grade} characterizes the concentration of a graph augmentation set with the following definition.

\begin{definition}[($\alpha, \gamma, \hat{d}$)-Augmentation] The data augmentation set $A$, which includes the original graph, is a ($\alpha, \gamma, \hat{d}$)-augmentation, if for each class $C_k$, there exists a main part $C_k^0\subseteq C_k$ (i.e., $\mathbb{P}[v\in C_k^0]\geq \sigma \mathbb{P}[v\in C_k]$), where $\mathrm{sup}_{v_1, v_2\in C_k^0}d_A(v_1, v_2)\leq \gamma(\frac{\mathcal{D}}{\hat{d}_{\mathrm{min}}^k})^{1/2}$ hold with $d_A(v_i, v_j)=\mathrm{min}_{\mathcal{G}'_i\in A(\mathcal{G}_{v_i}), \mathcal{G}'_j\in A(\mathcal{G}_{v_j})}\Vert (\frac{\hat{A}_{\mathcal{G}'_{i}}}{\hat{d}_{\mathcal{G}'_{i}}} - \frac{\hat{A}_{\mathcal{G}'_{j}}}{\hat{d}_{\mathcal{G}'_{j}}})X\Vert$, and $\hat{d}_{\mathrm{min}}^k$ is the minimum degree in class $C_k$.

\end{definition}

Given a ($\alpha, \gamma, \hat{d}$)-Augmentation, one can establish inequalities between contrastive losses and the dot product of class center pairs. This means classes will be more separable while the objectives are being optimized. For example, \cite{huang2021towards} gives the proofs for InfoNCE and the cross-correlation loss. Since our objective differs from both of them, we prove a similar theorem as \cite{huang2021towards}. Denote $\mu_k :=\mathbb{E}_{v\in C_k}\mathbb{E}_{\mathcal{G}'\in A(\mathcal{G}_v)}[f(\mathcal{G}')]$, $p_k := \mathbb{P}[v\in C_k]$, $S_\epsilon := \{ v\in \bigcup_{k=1}^KC_k:
\forall \mathcal{G}_1, \mathcal{G}_2\in A(\mathcal{G}_v), \Vert f(\mathcal{G}_1)-f(\mathcal{G}_2)\Vert\leq \epsilon\}$ and $R_\epsilon :=1-\mathbb{P}[S_\epsilon]$.

\begin{theorem}
    Assume that encoder $f$ with norm $1$ is $M$-Lipschitz continuous. For a given ($\alpha, \gamma, \hat{d}$) augmentation set $A$, any $\epsilon>0$ and $k\neq l$,
    \begin{equation}
        \mu_k^T\mu_l \leq \frac{1}{p_kp_l}(- \frac{1}{2n^2\mathcal{L}_{neg}} + \tau(\epsilon, \alpha, \gamma, \hat{d})),
    \end{equation}
where $\tau(\epsilon, \alpha, \gamma, \hat{d})=2R_{\epsilon}+16(1-\alpha(1-\frac{1}{2}\epsilon-\frac{1}{4}M\mathrm{max}_k(\gamma\sqrt{\frac{\mathcal{D}}{\hat{d}_\mathrm{min}^k}}))+KR_{\epsilon})^2+8(1-\alpha(1-\frac{1}{2}\epsilon-\frac{1}{4}M\mathrm{max}_k(\gamma\sqrt{\frac{\mathcal{D}}{\hat{d}_\mathrm{min}^k}}))+KR_{\epsilon}+\frac{K-1}{K})$.
\end{theorem}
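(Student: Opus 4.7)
The plan is to follow the template of the divergence-of-class-centers argument in Huang et al.\ 2021 (originally proved for the spectral/InfoNCE loss) and adapt it to the specific form $\mathcal{L}_{neg}=\alpha/\sum_{(i,j)\in\mathcal{N}'}n_in_j\|h_i-h_j\|^2$ used by \ourmodel. The high-level strategy is: (i) convert $\mu_k^T\mu_l$ into an expected pairwise distance between augmented embeddings, (ii) use the $(\alpha,\gamma,\hat d)$-augmentation assumption to replace the pairwise distance of augmentations by the pairwise distance of class centers on a ``good'' subset, and (iii) use the empirical lower bound on the negative spread that $\mathcal{L}_{neg}$ provides.

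First I would unfold the definitions. Because the encoder is unit-norm, the identity $2f(\mathcal{G}_1)^T f(\mathcal{G}_2)=2-\|f(\mathcal{G}_1)-f(\mathcal{G}_2)\|^2$ lets me rewrite
\begin{equation}
\mu_k^T\mu_l \;=\; \mathbb{E}_{v\in C_k, v'\in C_l}\mathbb{E}_{\mathcal{G}_1\in A(\mathcal{G}_v), \mathcal{G}_2\in A(\mathcal{G}_{v'})}\Bigl[\,1-\tfrac{1}{2}\|f(\mathcal{G}_1)-f(\mathcal{G}_2)\|^2\,\Bigr].
\end{equation}
So bounding $\mu_k^T\mu_l$ from above reduces to lower-bounding the expected squared distance between embeddings of augmentations drawn from different classes.

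Second, I would invoke the $(\alpha,\gamma,\hat d)$-augmentation assumption together with $M$-Lipschitzness of $f$ to establish a concentration lemma: for $v\in C_k^0\cap S_\epsilon$, every augmentation $\mathcal{G}'\in A(\mathcal{G}_v)$ satisfies $\|f(\mathcal{G}')-\mu_k\|\le \tfrac{1}{2}\epsilon+\tfrac{1}{4}M\gamma\sqrt{\mathcal{D}/\hat d_{\min}^k}$ (this is the standard intra-class concentration step from Huang et al.\ 2021, using that $d_A(v_1,v_2)$ controls how far apart any two augmentation embeddings can lie). The mass of nodes outside $C_k^0\cap S_\epsilon$ in any class is at most $(1-\alpha)+R_\epsilon$ of $p_k$, producing the $\alpha$- and $R_\epsilon$-dependent error budget that will later feed into $\tau$.

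Third, I would translate $\mathcal{L}_{neg}$ into a lower bound on the pairwise embedding spread. From $\sum_{(i,j)\in\mathcal{N}'} n_i n_j\|h_i-h_j\|^2=\alpha/\mathcal{L}_{neg}$, after viewing $\mathcal{N}'$ as an unbiased sample over supernode pairs and using the coarsening construction $X'=\widetilde D'^{-1}P^T\widetilde D X$ (so $h_i$ is the average of $f$ over the original nodes inside supernode $i$), I can rewrite the weighted sum as an expectation at the original-node level; the $n_in_j$ weights are precisely what turns a pair of supernodes into an average over the $n^2$ pairs of original nodes. Normalizing by $n^2$ gives the $\tfrac{1}{2n^2\mathcal{L}_{neg}}$ factor in the statement.

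Fourth, I would combine the two pieces. Splitting the expectation $\mathbb{E}_{v,v'}\|f(\mathcal{G}_1)-f(\mathcal{G}_2)\|^2$ by whether each of $v,v'$ lies in $C_{\cdot}^0\cap S_\epsilon$, using the concentration lemma on the good event and boundedness of the unit-norm embeddings on the bad event (contributing the $R_\epsilon$, $(1-\alpha)$, and the squared $(\cdot)^2$ terms, as well as the $\frac{K-1}{K}$ term that accounts for summing over classes $l\ne k$), yields
\begin{equation}
-\tfrac{1}{2n^2\mathcal{L}_{neg}} \;\ge\; -\tfrac{1}{2}\,\mathbb{E}\|f(\mathcal{G}_1)-f(\mathcal{G}_2)\|^2 \;\ge\; p_kp_l\,\mu_k^T\mu_l - \tau(\epsilon,\alpha,\gamma,\hat d).
\end{equation}
Rearranging gives the claimed bound. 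The explicit form of $\tau$ comes from carefully collecting the error terms across classes and pairs.

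The main obstacle will be step three: the empirical loss is computed over a random sample $\mathcal{N}'$ of supernode pairs weighted by the supernode sizes $n_i,n_j$, while $\mu_k^T\mu_l$ is defined as an expectation over the original node set. Bridging this mismatch—i.e., showing that the weighted supernode sum is (in expectation, or up to a controllable sampling error) the node-level pairwise distance averaged over $n^2$ pairs—requires treating the random sampling of $\mathcal{N}'$ as unbiased and using the fact that $h_i$ is the mean of $f$ over the nodes inside supernode $i$; this is where the coarsened-feature normalization $\widetilde D'^{-1}P^T\widetilde D X$ is essential. A secondary difficulty is bookkeeping the quadratic error term in $\tau$, which arises because both ends of a negative pair can independently fall into $S_\epsilon^c$ or outside the main part, giving products of the two bad-event probabilities.
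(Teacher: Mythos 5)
Your overall route is the same as the paper's: both adapt the divergence-of-class-centers argument of Huang et al.\ (2021) by decomposing the negative-pair similarity over classes and the good set $S_\epsilon$, using the $(\alpha,\gamma,\hat d)$-augmentation property plus $M$-Lipschitzness to control the error term that becomes $\tau$, and then rearranging a bound that links $\mathcal{L}_{neg}$ to the average pairwise similarity of augmented embeddings. Working with $\|f(\mathcal{G}_1)-f(\mathcal{G}_2)\|^2$ instead of $f(\mathcal{G}_1)^Tf(\mathcal{G}_2)$ is only a cosmetic difference given the unit-norm assumption.

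However, your step three contains a genuine gap. First, the claim that ``$h_i$ is the average of $f$ over the original nodes inside supernode $i$'' is false: $h_i$ is the GNN embedding of the supernode computed on the coarsened graph, and the normalization $X'=\widetilde{D}'^{-1}P^T\widetilde{D}X$ only yields asymptotic closeness of $Z_u$ and $H_{S_u}$ as the propagation depth tends to infinity (Theorem \ref{theorem_error}), not an exact within-cluster mean identity, so a proof resting on that identification would not go through. The paper avoids this entirely: the coarsened graph is the augmented view of every node it contains, so each original node $u$ is simply assigned $h_{I_u}$ as its augmented embedding, and the $n_in_j$-weighted supernode sum becomes the plain node-pair sum $\sum_{u,v}\|h_{I_u}-h_{I_v}\|^2$ by re-indexing --- no averaging claim and no appeal to the feature normalization is needed. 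Second, you never supply the step that handles the reciprocal structure of $\mathcal{L}_{neg}$ under the expectation over random coarsenings: $\mu_k$ is an expectation over augmentations, while your identity $\sum_{(i,j)}n_in_j\|h_i-h_j\|^2=\alpha/\mathcal{L}_{neg}$ holds only per realization, and ``viewing $\mathcal{N}'$ as unbiased'' does not transfer through the reciprocal in the required direction. The paper resolves this by writing $\mathcal{L}_{neg}=\tfrac12\mathbb{E}_{P\sim\mathcal{P}}\bigl[1/(n^2-\sum_{i,j}h_i^Th_j)\bigr]$ and applying Jensen's inequality with the convex map $x\mapsto 1/(n^2-x)$, which gives $\frac{1}{2n^2\mathcal{L}_{neg}}\le 1-\mathbb{E}\bigl[\tfrac{1}{n^2}\sum_{i,j}h_i^Th_j\bigr]$; without this (or an equivalent) convexity argument your displayed inequality $-\frac{1}{2n^2\mathcal{L}_{neg}}\ge-\frac12\,\mathbb{E}\|f(\mathcal{G}_1)-f(\mathcal{G}_2)\|^2$ is unjustified. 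With these two repairs, the remaining bookkeeping of $\Delta_1$, $R_\epsilon$, and the $\tfrac1K$ term proceeds as you sketched and as in the paper.
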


\begin{proof}
Our negative pair loss is
    \begin{align*}
  \mathcal{L}_{neg}  &= \mathbb{E}_{P\sim \mathcal{P}}[\frac{1}{\sum_{i=1}^{n} \sum_{j=1}^{n} [\Vert h_i\Vert^2 + \Vert h_j\Vert^2 -2h_i^Th_j]}]\\
    &=\frac{1}{2}\mathbb{E}_{P\sim \mathcal{P}}[\frac{1}{n^2-\sum_{i=1}^{n} \sum_{j=1}^{n}h_i^Th_j}].
\end{align*}
Since $\sum_{i=1}^{n} \sum_{j=1}^{n}h_i^Th_j\leq n^2$, and $\frac{1}{n^2-x}$ is convex for $x\leq n^2$. Using Jensen's inequality, we have:
\begin{align*}
    2\mathcal{L}_{neg} &\geq \frac{1}{n^2-\mathbb{E}_{P\sim \mathcal{P}}[\sum_{i=1}^{n} \sum_{j=1}^{n}h_i^Th_j]}\\
    &=\frac{1}{n^2(1-\mathbb{E}_{x_i,x_j}\mathbb{E}_{P\sim \mathcal{P}}[h_i^Th_j])}.
\end{align*}
Next, we focus on $\mathbb{E}_{x_i,x_j}\mathbb{E}_{P\sim \mathcal{P}}h_i^Th_j$:
\begin{align*}
    \mathbb{E}_{x_i,x_j}\mathbb{E}_{P\sim \mathcal{P}}[h_i^Th_j] &\geq \sum_{k=1}^K\sum_{l=1}^K \mathbb{E}_{x_i,x_j}\left[\mathbb{I}(x_i\in S_{\epsilon}\cap C_k)\mathbb{I}(x_j\in C_l)\mathbb{E}_{P\sim \mathcal{P}}(h_i^Th_j)\right] - \mathbb{E}_{x_i,x_j}[\mathbb{I}(x_i\in \Bar{S_{\epsilon}})]\\
    &=\sum_{k=1}^K\sum_{l=1}^K \mathbb{E}_{x_i,x_j}\left[\mathbb{I}(x_i\in C_k)\mathbb{I}(x_j\in C_l)(\mu_k^T\mu_l)\right] - R_{\epsilon} + \Delta_1\\
&=\sum_{k=1}^K\sum_{l=1}^K\left[p_kp_l\mu_k^T\mu_l\right] - R_{\epsilon} + \Delta_1\\
    &\geq p_kp_l\mu_k^T\mu_l + \frac{1}{K} - R_{\epsilon} + \Delta_1,
\end{align*}
where $\Delta_1$ is defined as
\begin{align*}
    \Delta_1 &= \sum_{k=1}^K\sum_{l=1}^K \mathbb{E}_{x_i,x_j}[\mathbb{I}(x_i\in S_{\epsilon}\cap C_k)\mathbb{I}(x_j\in C_l)\mathbb{E}_{P\sim \mathcal{P}}(h_i^Th_j)]\\
    &-  \sum_{k=1}^K\sum_{l=1}^K \mathbb{E}_{x_i,x_j}[\mathbb{I}(x_i\in C_k)\mathbb{I}(x_j\in C_l)(\mu_k^T\mu_l)]\\
    &=-\sum_{k=1}^K\sum_{l=1}^K \mathbb{E}_{x_i,x_j}[(\mathbb{I}(x_i\in C_k)-\mathbb{I}(x_i\in S_{\epsilon}\cap C_k))\mathbb{I}(x_j\in C_l)\mathbb{E}_{P\sim \mathcal{P}}(h_i^Th_j)]\\
    &+\sum_{k=1}^K\sum_{l=1}^K \mathbb{E}_{x_i,x_j}[\mathbb{I}(x_i\in C_k)\mathbb{I}(x_j\in C_l)\mathbb{E}_{P\sim \mathcal{P}}[h_i^Th_j-\mu_k^T\mu_l]].
\end{align*}
Then,
\begin{align*}
    |\Delta_1| \leq & R_{\epsilon}+\sum_{k=1}^K\sum_{l=1}^K \mathbb{E}_{x_i,x_j}\left[\mathbb{I}(x_i\in C_k)\mathbb{I}(x_j\in C_l)\mathbb{E}_{P\sim \mathcal{P}}|h_i^Th_j-\mu_k^T\mu_l|\right]\\
    \leq & R_{\epsilon}+16(1-\alpha(1-\frac{1}{2}\epsilon-\frac{1}{4}M\mathrm{max}_k(\gamma\sqrt{\frac{\mathcal{D}}{\hat{d}_\mathrm{min}^k}}))+KR_{\epsilon})^2\\
    &+8(1-\alpha(1-\frac{1}{2}\epsilon-\frac{1}{4}M\mathrm{max}_k(\gamma\sqrt{\frac{\mathcal{D}}{\hat{d}_\mathrm{min}^k}}))+KR_{\epsilon}).
\end{align*}
Thus, if we define
\begin{align*}
    \tau(\epsilon, \alpha, \gamma, \hat{d})=&2R_{\epsilon}+16(1-\alpha(1-\frac{1}{2}\epsilon-\frac{1}{4}M\mathrm{max}_k(\gamma\sqrt{\frac{\mathcal{D}}{\hat{d}_\mathrm{min}^k}}))+KR_{\epsilon})^2\\
    &+8(1-\alpha(1-\frac{1}{2}\epsilon-\frac{1}{4}M\mathrm{max}_k(\gamma\sqrt{\frac{\mathcal{D}}{\hat{d}_\mathrm{min}^k}}))+KR_{\epsilon}+\frac{K-1}{K}),
\end{align*}
we have
\begin{align*}
    \mu_k^T\mu_l &\leq \frac{1}{p_kp_l}(1 - \frac{1}{2n^2\mathcal{L}_{neg}}-\frac{1}{K}+R_{\epsilon}+|\Delta_1|)\\
    &\leq \frac{1}{p_kp_l}(- \frac{1}{2n^2\mathcal{L}_{neg}} + \tau(\epsilon, \alpha, \gamma, \hat{d})).
\end{align*}
This finishes the proof.
\end{proof}

Next we discuss the concentration property of the random coarsening augmentation. First notice that for any two nodes $v_i$ and $v_j$, the probability that they are coarsened together is equal to the probability that they are connected by randomly selected edges in the algorithm. Suppose the edges are selected independent and identically with probability $p$, then the connection probability is lower bounded by $p^{\mathrm{dia}(G)}$, where $\mathrm{dia}(G)$ is the diameter of $G$. Once $u$ and $v$ are coarsened together in at least one coarsened graph, $d(u,v) = 0$, which means our random coarsening augmentation can be very well-concentrated.

\section{Experimental details}\label{sec:details}

For all unsupervised models, the learned representations are evaluated by training and testing a logistic regression classifier except for Ogbn-Arxiv. Since Ogbn-arxiv exhibits more complex characteristics, we use a more powerful MLP classifier. The detailed statistics of the six datasets are summarized in Table \ref{tab:datasets}.

\begin{table*}[!htbp]\small
	\caption{Summary of the datasets used in our experiments}.	
	\begin{tabular}{l|cccc}\toprule \centering
	\textbf{Dataset}&\textbf{Nodes}&\textbf{Features}&\textbf{Classes}&\textbf{Avg. Degree}\\
	\midrule
	Cora&2,708 &1,433&7&3.907\\
	Citeseer&3,327&3,703&6&2.74\\
	Pubmed&19,717&500&3&4.50\\
	Amazon-Photo&7,650 &745&8&31.13\\
	Amazon-Computers&13,752& 767 &10&35.76\\
        Ogbn-Arxiv&169,343   & 128 &  40&13.67 \\
\bottomrule
	\end{tabular}
	\label{tab:datasets}
	\centering
\end{table*}

\noindent\paragraph{Details of our model} In our model, we use SGC as the encoder for Cora, Citeseer, Pubmed, while we use GCN as the encoder for Photo and Computers. 
The detailed hyperparameter settings are listed in Table \ref{tab:Hyper-parameters}.

\begin{table}[!thbp]
\setlength{\abovecaptionskip}{0.05cm}
	\caption{Summary of the  hyper-parameters.}\label{tab:Hyper-parameters}
	\centering
\begin{tabular}{lccccc}\toprule
 \textbf{Dataset} & \textbf{Epoch}& \textbf{Learning rate}& \textbf{$\alpha$}&\textbf{$\beta$}\\
\midrule 
Cora & 25 &  0.01& 15000&500 \\
Citeseer & 200 &0.0002  & 15000& 500\\
Pubmed &  25&0.02  & 20000&200 \\
Amazon-Photo &20  &0.001  & 100000&100000 \\
Amazon-Computers & 20 & 0.0002 &20000 & 20000\\
Ogbn-Arxiv & 10 & 0.0001 &  2000000&  200000\\
\bottomrule
\end{tabular}
\end{table}

\noindent\paragraph{Details of Baselines} 
We compare \ourmodel~ with state-of-the-art GCL models DGI\footnote{\small DGI (MIT License): \url{https://github.com/pyg-team/pytorch_geometric/blob/master/examples/infomax_transductive.py}}, GRACE\footnote{\small GRACE (Apache License 2.0):  \url{https://github.com/CRIPAC-DIG/GRACE}}, GraphCL\footnote{\small GraphCL (MIT License): 
  \url{https://github.com/Shen-Lab/GraphCL}}, GCA\footnote{\small GCA (MIT License): 
  \url{https://github.com/CRIPAC-DIG/GCA}}, 
  CCA-SSG\footnote{\small CCA-SSG (Apache License 2.0): 
  \url{https://github.com/hengruizhang98/CCA-SSG}}, gCooL\footnote{\small gCooL (MIT License):
  \url{https://github.com/lblaoke/gCooL}}, GGD\footnote{\small GGD (MIT License): 
\url{https://github.com/zyzisastudyreallyhardguy/Graph-Group-Discrimination}}, GRADE\footnote{\small GRADE 
 (MIT License): 
  \url{https://github.com/BUPT-GAMMA/Uncovering-the-Structural-Fairness-in-Graph-Contrastive-Learning}} and classic graph embedding model Deepwalk. For all the baseline models, we use the source code from corresponding repositories. Due to the large scale of  Ogbn-Arixv, some GCL models are unable to process the full-graph on GPU because of memory limitations. As a result, we apply graph sampling techniques to train these models.

\noindent\paragraph{Configuration} All the algorithms and models are implemented in Python and PyTorch Geometric. Experiments are conducted on a server with an NVIDIA 3090 GPU (24 GB memory) and an Intel(R) Xeon(R) Silver 4210R CPU @ 2.40GHz.

\noindent\paragraph{Memory usage} Figure \ref{fig:more_memory} shows the memory usage of our model and 6 mainstream GCL models on Citeseer and Pubmed. \ourmodel~exhibits the same level of memory usage as GGD, which is specifically designed to save computation costs; our model benefits from the effective reduction of graph size via random coarsening.

\begin{figure*}[htbp]
	\setlength{\abovecaptionskip}{0.1cm}
	\setlength{\belowcaptionskip}{-0.3cm}
	\centering
 \subfigure[Cora]{\includegraphics[width=0.30\textwidth]{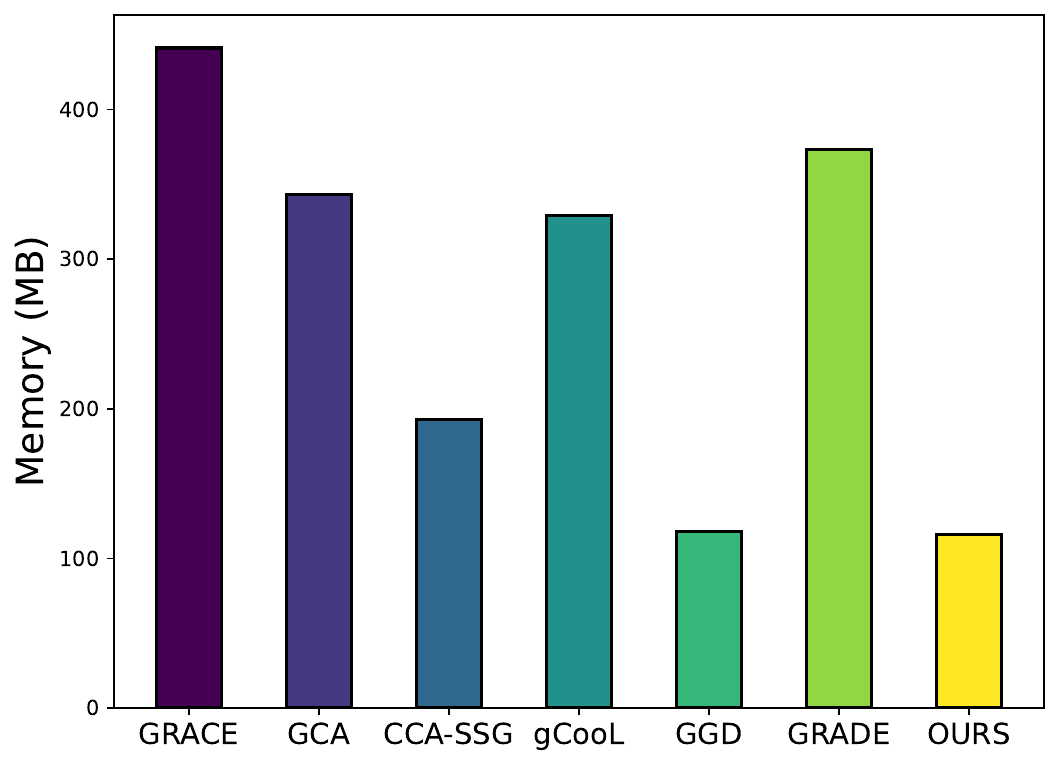}}
	\subfigure[Citeseer]{\includegraphics[width=0.3\textwidth]{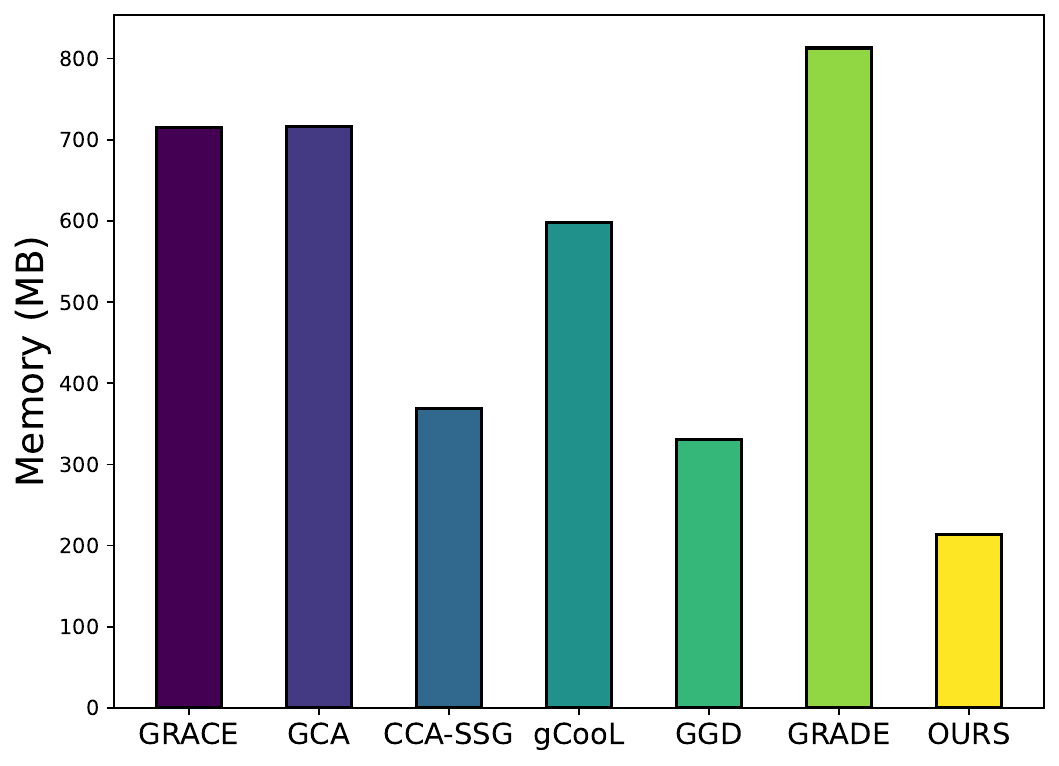}}
	\subfigure[Pubmed]{\includegraphics[width=0.30\textwidth]{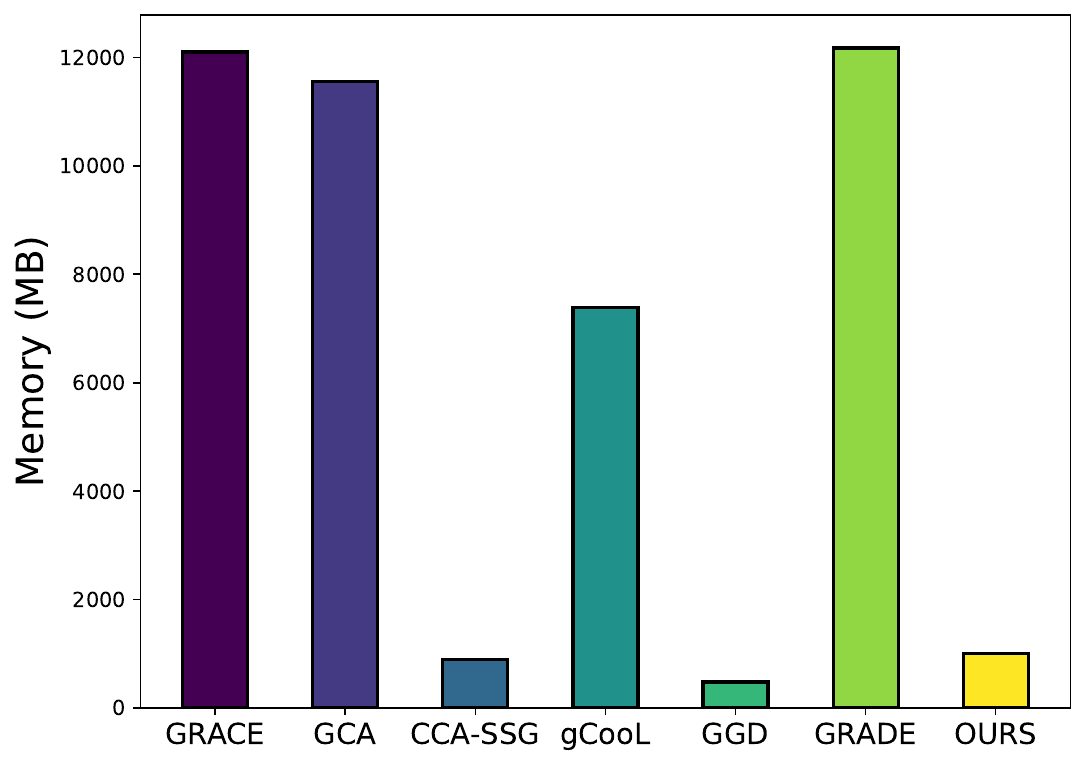}}
	\caption{The memory usage of baselines and RGCCL on Citeseer and Pubmed.}
 	\label{fig:more_memory}
\end{figure*}

\noindent\paragraph{More quantitative analysis about community bias amplification} To further demonstrate that RGCCL effectively alleviates the amplification of community bias, we use the Matthew's coefficient to measure community bias. Table \ref{mcc} presents the results of Matthew's coefficient for representative GCL models and our RGCCL. The results indicate that the bias in the embeddings learned by RGCCL is significantly less than that of other GCL models.

\begin{table*}[!htbp]
\caption{Matthew's coefficient for RGCCL and four baselines.}\label{mcc}
	\centering
    \begin{tabular}{l|ccc}
    \toprule
         & Cora  & CiteSeer & PubMed\\
         \midrule
         DGI&73.0$\pm$1.5&64.5$\pm$1.2&57.8$\pm$4.3\\
         GRACE&67.9$\pm$1.6&60.0$\pm$2.0&62.3$\pm$3.8\\
         CCA-SSG&74.5$\pm$1.6&64.6$\pm$1.4&64.0$\pm$4.1\\
         GGD&77.0$\pm$1.6&64.9$\pm$1.2&63.7$\pm$4.1\\
         GRADE&75.7$\pm$1.5&62.1$\pm$1.3&58.1$\pm$3.4\\
         RGCCL&78.9$\pm$0.9 & 66.3$\pm$0.8&65.6$\pm$4.2\\
         \bottomrule
    \end{tabular}
\end{table*}

\end{document}